\DeclareTextFontCommand{\bemph}{\bfseries\em}
\newtheorem{theorem}{{Theorem}}
\newtheorem{lemma}{{Lemma}}
\newcounter{parcount}
\definecolor{shadecolor}{gray}{0.9}
\renewrobustcmd{\bfseries}{\fontseries{b}\selectfont}
\renewrobustcmd{\boldmath}{}
\newrobustcmd{\BF}{\bfseries}
\newcommand{\bx}{\mathbf{x}}
\newcommand{\calJ}{{\mathcal{J}}}
\newcommand{\calL}{{\mathcal{L}}}
\newcommand{\calX}{{\mathcal{X}}}
\newcommand{\calY}{{\mathcal{Y}}}
\newcommand{\calZ}{{\mathcal{Z}}}
\newcommand{\Real}{\mathbb R}
\definecolor{orange}{rgb}{1,0.3,0}
\definecolor{copper}{rgb}{1,.62,.40}
\definecolor{hotpink}{rgb}{1,0,0.5}
\definecolor{darkgreen1}{rgb}{0, .35, 0}
\definecolor{darkgreen}{rgb}{0, .6, 0}
\definecolor{darkred}{rgb}{.75,0,0}
\newcommand{\be}{\begin{eqnarray}}
\newcommand{\ee}{\end{eqnarray}}
\newcommand{\bee}{\begin{eqnarray*}}
\newcommand{\eee}{\end{eqnarray*}}
\newcommand{\matrixb}{\left[ \begin{array}}
\newcommand{\matrixe}{\end{array} \right]}
\newcommand{\argmax}{\operatornamewithlimits{\arg \max}}
\newcommand{\argmin}{\operatornamewithlimits{\arg \min}}
\newcommand{\E}{\mathbb E}
\newcommand{\triloss}{\calL_{\text{triplet}}}
\DeclarePairedDelimiterX{\infdivx}[2]{(}{)}{%
  #1\;\delimsize\|\;#2%
}
\newcommand{\kld}[2]{\ensuremath{D_{KL}\infdivx{#1}{#2}}\xspace}
\newcommand{\x}{\mathbf{x}} %
\newcommand{\tx}{\mathbf{\widetilde{x}}} %
\newcommand{\txval}{\widetilde{x}} %
\newcommand{\y}{\mathbf{y}} %
\newcommand{\tX}{\widetilde{X}} %
\newcommand{\DIMCO}{\textbf{D}iscrete \textbf{I}nfo\textbf{M}ax \textbf{CO}des }
\begin{document}
\icmltitlerunning{Discrete Infomax Codes for Supervised Representation Learning}
\twocolumn[
\icmltitle{Discrete Infomax Codes for Supervised Representation Learning}

\icmlsetsymbol{equal}{*}

\begin{icmlauthorlist}
\icmlauthor{Yoonho Lee}{kakao}
\icmlauthor{Wonjae Kim}{kakao}
\icmlauthor{Wonpyo Park}{kakao}
\icmlauthor{Seungjin Choi}{baro}
\end{icmlauthorlist}

\icmlaffiliation{kakao}{Kakao Corporation, Korea}
\icmlaffiliation{baro}{BARO, Korea}

\icmlkeywords{Machine Learning, ICML}
\icmlcorrespondingauthor{Yoonho Lee}{einet89@gmail.com}

\vskip 0.3in
]

\printAffiliationsAndNotice{}  %

\begin{abstract}
Learning compact discrete representations of data is a key task on its own or for facilitating subsequent processing of data. 
In this paper we present a model that produces \DIMCO (DIMCO);
we learn a probabilistic encoder that yields $k$-way $d$-dimensional codes associated with input data.
Our model's learning objective is to maximize the mutual information between codes and labels with a regularization, which enforces entries of a codeword to be as independent as possible.
We show that the infomax principle also justifies previous loss functions (e.g., cross-entropy) as its special cases. 
Our analysis also shows that using shorter codes, as DIMCO does, reduces overfitting in the context of few-shot classification.
Through experiments in various domains, we observe this implicit meta-regularization effect of DIMCO.
Furthermore, we show that the codes learned by DIMCO are efficient in terms of both memory and retrieval time compared to previous methods.

\end{abstract}

\section{Introduction} \label{sec:intro}
Metric learning and few-shot classification are two problem setups that test a model's ability to classify data from classes that were unseen during training.
Such problems are also commonly interpreted as testing meta-learning ability,
since the process of constructing a classifier with examples from new classes can be seen as learning.
Many recent works \citep{hoffer2015deep,movshovitz2017no,snell2017prototypical,oreshkin2018tadam} 
tackle this problem by learning a continuous embedding ($\tx \in \Real^n$) of datapoints.
Such models compare pairs of embeddings using e.g., Euclidean distance to perform nearest neighbor classification.
However, it remains unclear whether such models effectively utilize the entire space of $\Real^n$.

Information theory provides a framework in which we can effectively ask such questions about representation schemes.
In particular, the \bemph{information bottleneck} principle \citep{tishby2000information,shwartz2017opening} formalizes the optimality of a representation.
This principle states that the optimal representation $\tX$ is one that maximally compresses the input $X$ while also being predictive of labels $Y$.
From this viewpoint, we see that the previous methods which map data to $\Real^n$ focus on being predictive of labels $Y$ but not on compressing $X$.

The degree of compression of an embedding is the number of bits it reflects about the original data.
Note that for continuous embeddings, each of the $n$ numbers in a $n$-dimensional embedding requires $32$ bits; 
it is unlikely that unconstrained optimization of such embeddings use all of these $32n$ bits effectively.
We propose to resolve this limitation by instead using \bemph{discrete embeddings} and controlling the number of bits in each dimension via hyperparameters.
To this end, we propose a model that produces \DIMCO (DIMCO) via an end-to-end learnable neural network encoder.

This work's primary contributions are as follows.
We motivate mutual information as an objective for learning embeddings,
and propose an efficient method of estimating it in the discrete case.
We experimentally demonstrate that learned discrete embeddings are more memory- and time- efficient compared to continuous embeddings.
Our experiments also show that using discrete embeddings helps meta-generalization by acting as an information bottleneck.
We also provide theoretical support for this connection through an information-theoretic PAC bound that shows the generalization characteristics of learned discrete codes.

This paper is organized as follows.
We propose our model for learning discrete codes in \cref{sec:method}.
We justify our loss function and also provide generalization bound for our setup in \cref{sec:analysis}.
In \cref{sec:experiments}, we present experiments that 
We compare our method to related work in \cref{sec:previous} and conclude our paper in \cref{sec:discussion}.

\section{Discrete Infomax Codes (DIMCO)} \label{sec:method}
\begin{figure*}
\centering
\begin{subfigure}[b]{0.55\textwidth}
\includegraphics[width=\linewidth]{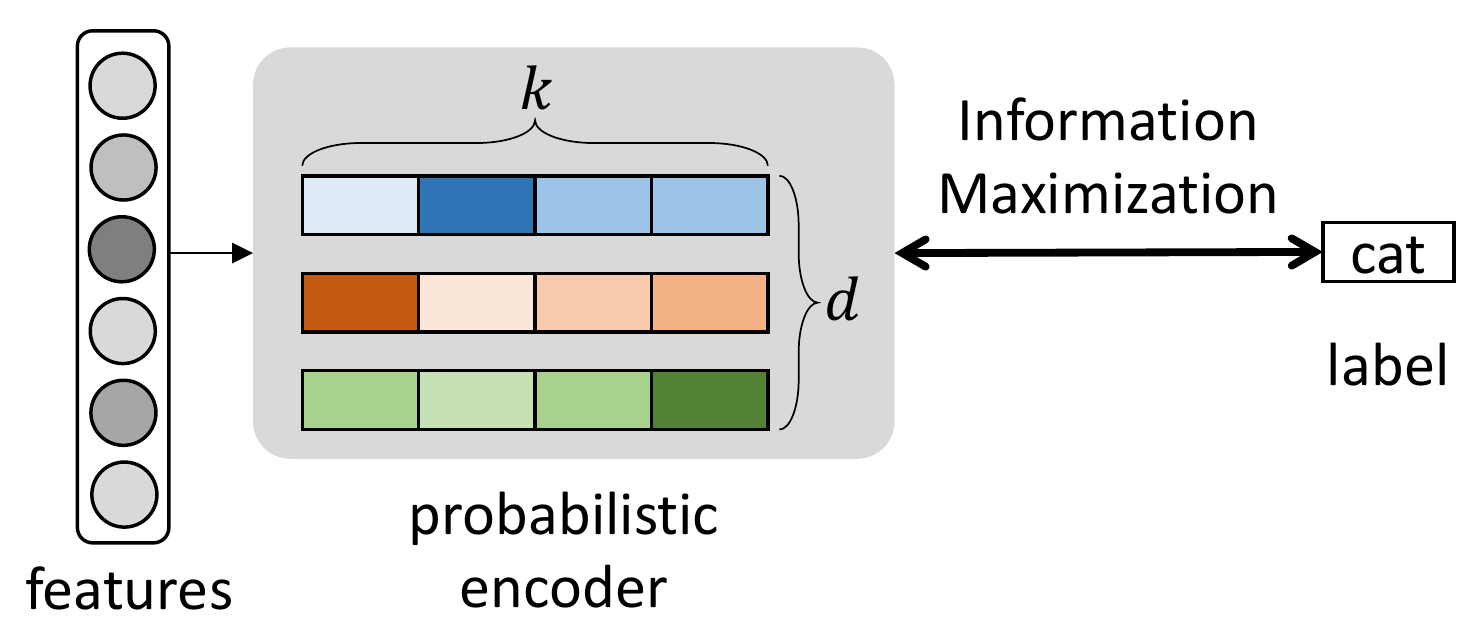} 
\caption{Training objective}
\end{subfigure}
\hspace{1cm}
\begin{subfigure}[b]{0.35\textwidth}
\includegraphics[width=\linewidth]{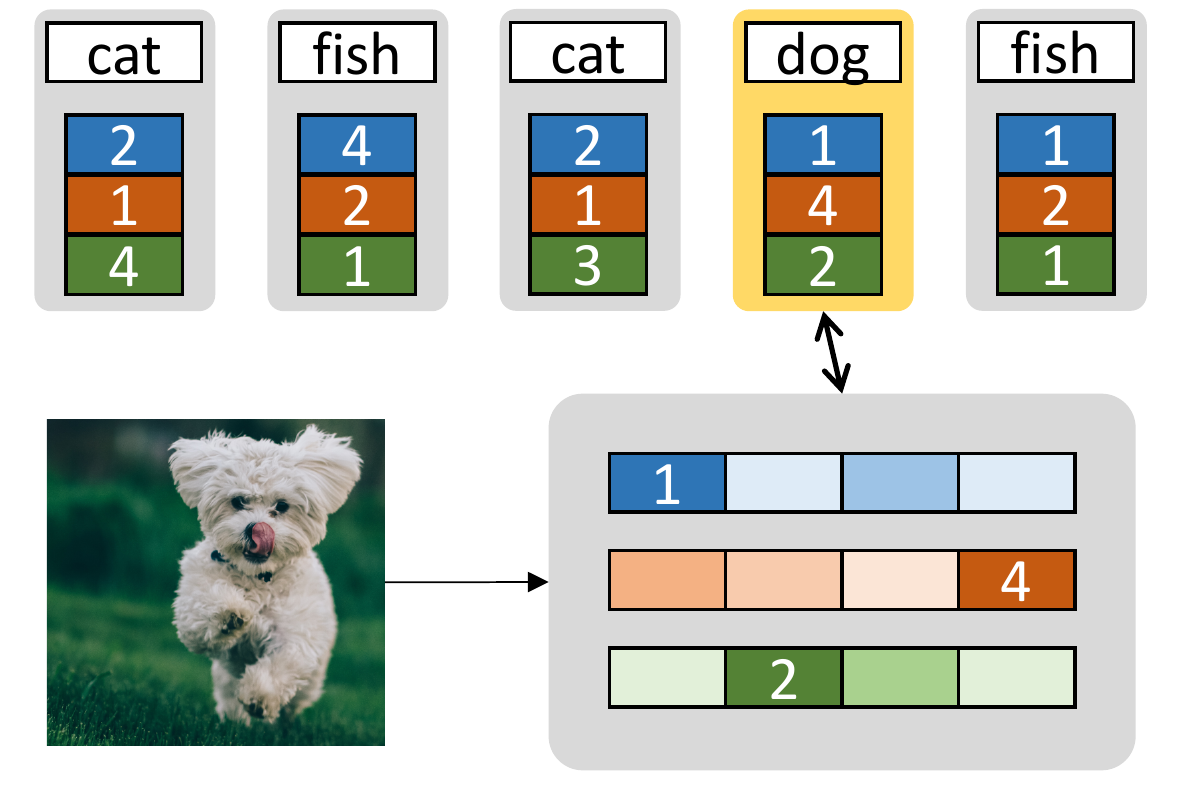}
\caption{Retrieval}
\end{subfigure}
\vspace{-5pt}
\caption{
    A graphical overview of \DIMCO (DIMCO).
    (a) Discrete codes are produced by a probabilistic encoder that maps each datapoint to a distribution of $k$-way $d$-dimensional discrete codes.
    The encoder is trained to maximize the mutual information between code distribution and label distribution.
    (b) We classify query images by comparing against a support set of discrete codes and corresponding labels.
    Our similarity metric is the query's log probability of each discrete code.
}
\label{fig:overview}
\vspace{-5pt}
\end{figure*}

We present our model which produces \DIMCO (DIMCO).
A deep neural network is trained end-to-end to learn $k$-way $d$-dimensional discrete codes that maximally preserves the information on labels.
We outline the training procedure in \cref{alg:dimco}, 
and we also illustrate the overall structure in the case of $4$-way $3$-dimensional codes ($k=4, d=3$) in \cref{fig:overview}.

\subsection{Learnable Discrete Codes}
\label{subsec:net}

Suppose that we are given a set of labeled examples, which are realizations of random variables $(X, Y) \sim p(\bx,y)$,
where $X$ is the input, and its corresponding label is $Y$. 
We denote its codebook by $\tX$, which is a compressed representation of $X$.
Realizations of $X$ and $Y$ are denoted by $\bx \in \Real^D$ and $y \in \{1,\ldots,c\}$.

We construct a probabilistic encoder $p(\tx | \bx)$, which is implemented by a deep neural network, 
that maps an input $\bx$ to a \bemph{$\bm{k}$-way $\bm{d}$-dimensional code} $\tx \in \{1,2,\ldots,k\}^d$.
That is, each entry of $\tx$ takes on one of $k$ possible values 
and the cardinality of $\tX$ is $|\tX|=k^d$.
Special cases of this coding scheme include $k$-way class labels ($d=1$), $d$-dimensional binary codes ($k=2$), and even fixed-length decimal integers ($k=10$).

We now describe our model which produces discrete infomax codes.
A neural network encoder $f(\bx; \theta)$ outputs $k$-dimensional categorical distributions, $\mathrm{Cat}(p_{i,1}, \ldots, p_{i,k})$. 
Here, $p_{i,j} (\bx)$ represents the probability that output variable $i$ takes on value $j$, consuming $\bx$ as an input, for $i=1,\ldots,d$ and $j=1,\ldots,k$.
The encoder takes $\bx$ as an input to produce logits $\{l_{i,j}\}$, which are reshaped into a $d \times k$ matrix:
\be
\label{eq:cont_rep}
\begin{bmatrix}
    l_{1,1} & l_{1,2} & l_{1,3} & \dots  & l_{1,k} \\
    l_{2,1} & l_{2,2} & l_{2,3} & \dots  & l_{2,k} \\
    \vdots  & \vdots  & \vdots  & \ddots & \vdots  \\
    l_{d,1} & l_{d,2} & l_{d,3} & \dots  & l_{d,k}
\end{bmatrix}.
\ee
These logits undergo softmax functions to yield
\begin{eqnarray}
\label{eq:to_prob}
p_{i,j} = \frac{\exp({l_{i,j}})}{\sum_{j=1}^{k} \exp({l_{i,j}})}.
\end{eqnarray}
Each example $\bx$ in the training set is assigned a codeword $\tx=[\widetilde{x}_1,\ldots, \widetilde{x}_d]^{\top}$, 
each entry of which is determined by one of $k$ events that is most probable, i.e.,
\begin{eqnarray}
\label{eq:argmax}
\tx 
= \Big[
    \overbrace{\argmax_j p_{1j}}^{\txval_1}, \hspace{5pt} 
    \cdots, \hspace{5pt} 
    \overbrace{\argmax_j p_{dj}}^{\txval_d} 
\Big]^{\top}.
\end{eqnarray}
While the stochastic encoder $p(\tx | \bx)$ indues a soft partitioning of input data, 
codewords assigned by the rule in (\ref{eq:argmax}) yields a hard partitioning of $X$.

\subsection{Loss Function}
\label{subsec:loss}

\begin{algorithm}[h]
\caption{DIMCO Training Procedure}
\label{alg:dimco}
\begin{algorithmic}
    \STATE Initialize network parameters $\theta$
    \REPEAT
    \STATE Sample a batch $\{(\bx^n, y^n)\}$
    \STATE Compute logits $\{ l_{i, j}^n \} = f(\bx^n; \theta)$ \hfill  via \eqref{eq:cont_rep}
    \STATE Compute probs $\{ p_{i, j}^n \} = \textrm{softmax}(\{ l_{i, j}^n \})$ \hfill  via \eqref{eq:to_prob}
    \STATE Update $\theta$ to minimize loss $\calL(\{p_{i,j}^n\}, \{y^n\})$ \hfill  via \eqref{eq:regularized_loss}
    \UNTIL{converged}
\end{algorithmic}
\end{algorithm}

The $i$-th symbol is assumed to be sampled from the resulting categorical distribution $\mathrm{Cat}(p_{i1}, \ldots, p_{ik})$.
We denote the resulting distribution over codes as $\tX$, and a code as $\tx$.
Instead of sampling $\tx \sim \tX$ during training, we use a loss function that optimizes the expected performance of the entire distribution $\tX$. 

We train the encoder by maximizing the \bemph{mutual information} between the distributions of codes $\tX$ and labels $Y$.
The mutual information is a symmetric quantity that measures the amount of information shared between two random variables.
It is defined as
\begin{align}
I(\tX; Y)
= H(\tX) - H(\tX|Y).
\label{eq:loss}
\end{align}
Since $\tX$ and $Y$ are discrete, their mutual information is bounded from both above and below as 
$0 \leq I(\tX; Y) \leq \log |\tX| = d \log k$.

To optimize the mutual information, the encoder directly computes empirical estimates of the two terms on the right-hand side of \eqref{eq:loss}.
Note that both terms consist of entropies of categorical distributions, which has the general closed-form formula 
\begin{align}
H(p_1, \ldots, p_n) = \sum_{i=1}^n p_i \log p_i.
\end{align}

Let $\overline{p}_{ij}$ be the empirical average of $p_{ij}$ calculated using data points in a batch.
Then, $\overline{p}_{ij}$ is an empirical estimate of the marginal distribution $p(\txval)$.
We compute the empirical estimate of $H(\tX)$ by adding its entropy estimate for each dimension.
\begin{align}
\widehat{H}(\tX)
&= \sum_{i=1}^d H(\overline{p}_{i1}, \ldots, \overline{p}_{ik}).
\label{eq:marginal_entropy}
\end{align}
We can also compute 
\begin{align}
\widehat{H}(\tX|Y) = \sum_{y=1}^c p(Y=y) \widehat{H}(\tX|Y=y),
\label{eq:conditional_entropy}
\end{align}
where $c$ is the number of classes.
The marginal probability $p(Y=y)$ is the frequency of class $y$ in the minibatch,
and $H(\tX | Y=y)$ can be computed by computing \eqref{eq:marginal_entropy} using only datapoints which belong to class $y$.
We emphasize that such a closed-form estimation of $I(\tX; Y)$ is only possible because we are using discrete codes. 
If $\tX$ was a continuous variable instead, we would have had to resort to approximtions of $I(\tX; Y)$ (e.g., \citet{belghazi2018mine}) to optimize it.

We briefly examine the loss function \eqref{eq:loss} to see why maximizing it results in discriminative $\tX$.
Maximizing $H(\tX)$ encourages the distribution of all codes to be as dispersed as possible,
while minimizing $H(\tX|Y)$ encourages the average embedding of each class to be as concentrated as possible.
Thus, the overall loss $I(\tX; Y)$ imposes a partitioning problem on the model:
it learns to split the entire probability space into regions with minimal overlap between different classes.
We will analyze our loss function more rigorously in \cref{subsec:mi_as_objective}. 

\subsection{Similarity Measure}
\label{subsec:similarity}

Suppose that all data points in the training set are assigned their codewords according to the rule (\ref{eq:argmax}).
Now we introduce how to compute a similarity between a query datapoint $\bx^{(q)}$ and a support datapoint $\bx^{(s)}$ so that it can be used for information retrieval or few-shot classification 
In the meta-learning literature, $\bx^{(q)}, \bx^{(s)}$ are also called test data and query data, respectively \citep{finn2017model}.

Denote by $\tx^{(s)}$ the codeword associated with $\bx^{(s)}$, constructed by (\ref{eq:argmax}).
For the test data  $\bx^{(q)}$, the encoder yields $p_{i,j}(\bx^{(q)})$ for $i=1,\ldots,d$ and $j=1,\ldots,k$.
As a similarity measure between $\bx^{(q)}$ and $\bx^{(s)}$, we calculate the following log probability
\begin{eqnarray}
\label{eq:eval}
\sum_{i=1}^d \log p_{i,\txval_i^{(s)}} (\bx^{(q)}).
\end{eqnarray}
The probabilistic quantity \eqref{eq:eval} indicates that  $\bx^{(q)}$ and $\bx^{(s)}$ become more similar 
when encoder's output, when $\bx^{(q)}$ is provided, is well aligned with $\tx^{(s)}$.

We can view our similarity measure \eqref{eq:eval} as a probabilistic generalization 
of the Hamming distance \citep{hamming1950error}. 
The Hamming distance quantifies the similarity between two strings of equal length as the number of positions at which the corresponding symbols are equal.
Because we have access to a distribution over codes, we use \eqref{eq:eval} to directly compute the log probability of having the same symbol at each position.

We use \eqref{eq:eval} as a similarity metric for both few-shot classification and image retrieval.
We perform few-shot classification by computing a codeword for each class via \eqref{eq:argmax} and classifying each test image by choosing the class that has the highest value of \eqref{eq:eval}.
We similarly perform image retrieval by mapping each support image to its most likely code \eqref{eq:argmax} and for each query image retrieving the support image that has the highest \eqref{eq:eval}. 

While we have described the operations in \eqref{eq:argmax} and \eqref{eq:eval} for a single pair $(\bx^{(q)}, \bx^{(s)})$,
one can easily parallelize our evaluation procedure since it is an argmax followed by a sum%
\footnote{We show a parallel implementation in the supplementary material.}.
Furthermore, $\tx$ typically requires little memory as it consists of discrete values, allowing us to compare against large support sets in parallel.
Experiments in \cref{subsec:retrieval} investigate the degree of DIMCO's efficiency in terms of both time and memory.

\subsection{Regularizing by Enforcing Independence}
\label{subsec:regularization}
One way of interpreting the code distribution $\tX$ is as a group of $d$ separate code distributions $\txval_1, \ldots, \txval_d$.
Note that the similarity measure described in \eqref{eq:eval} can be seen as ensemble of the similarity measures of these $d$ models.
A classic result in ensemble learning is that using more diverse learners increases ensemble performance \cite{kuncheva2003measures}.
In a similar spirit, we used an optional regularizer which promotes pairwise independence between each pair in these $d$ codes. 
Using this regularizer stabilized training, especially in more large-scale problems.

Specifically, we randomly sample pairs of indices $i_1, i_2$ from $\set{1, \ldots, d}$ during each forward pass.
Note that $\txval_{i_1} \otimes \txval_{i_2}$ and $(\txval_{i_1}, \txval_{i_2})$ are both categorical distributions with support size $k^2$, and that we can estimate the two different distributions within each batch.
We minimize their KL divergence to promote independence between these two distributions:
\be
\label{eq:indi_loss}
\calL_\textrm{ind}
= \kld
{\txval_{i_1} \otimes \txval_{i_2}}
{\txval_{i_1}, \txval_{i_2}}.
\ee
We compute \eqref{eq:indi_loss} for a fixed number of random pairs of indices for each batch.
The cost of computing this regularization term is miniscule compared to that of other components such as feeding data through the encoder.

Using this regularizer in conjunction with the learning objective \eqref{eq:loss} yields the following regularized loss:
\begin{align}
\label{eq:regularized_loss}
\calL 
&= -I(\tX; Y) + \lambda \calL_\textrm{ind} \nonumber \\
&= -H(\tX) + H(\tX|Y) + \lambda \calL_\textrm{ind}.
\end{align}
We fix $\lambda=1$ in all experiments; we found that DIMCO's performance was not particularly sensitive to this hyperparameter.
We emphasize that while this optional regularizer stabilizes training, our learning objective is the mutual information $I(\tX; Y)$ in \eqref{eq:loss}.

\subsection{Visualization of Codes}
\begin{figure}[t]
\centering
    \includegraphics[width=1.0\linewidth]{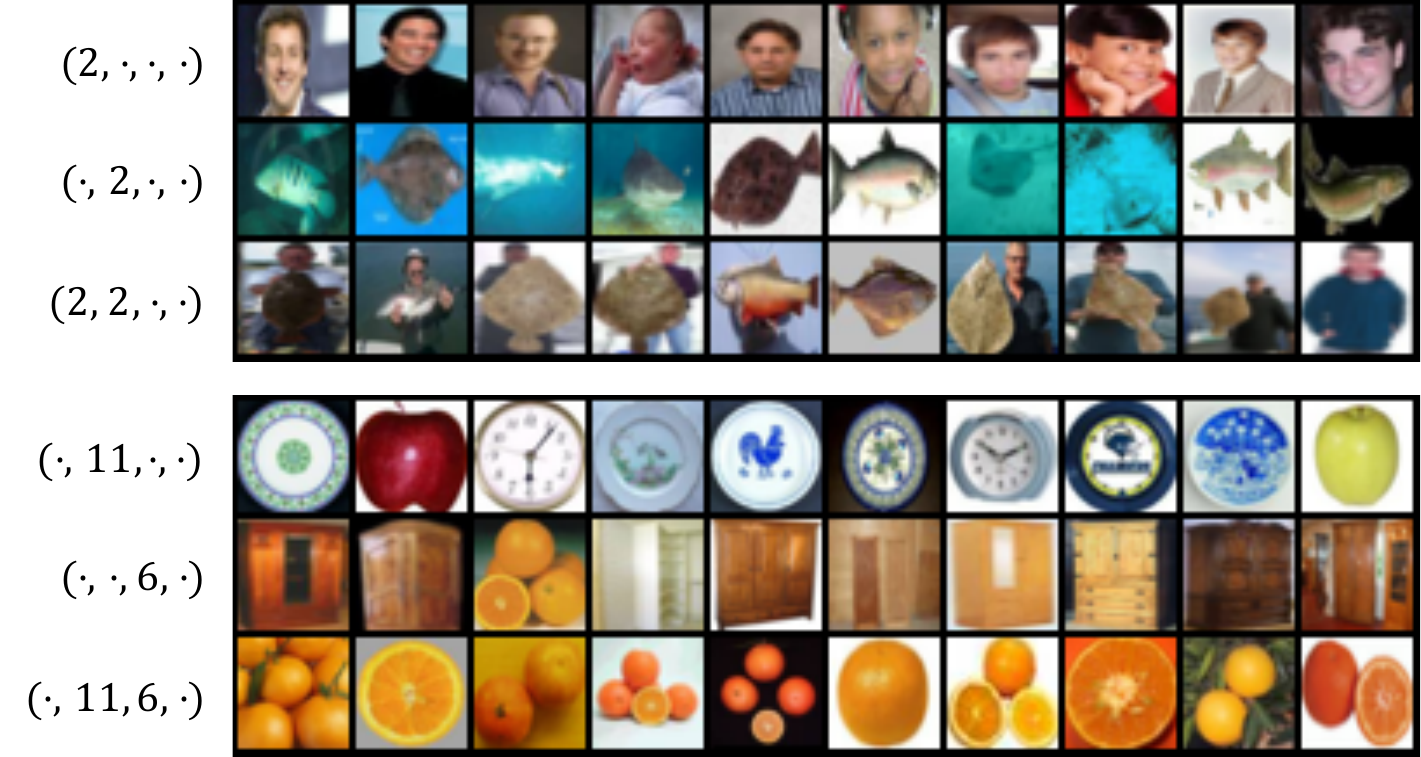}
\caption{
    Visualization of codes of a DIMCO model ($k=16$, $d=4$) trained on CIFAR100.
    Each row shows the top 10 images that assign highest marginal probability to specific codes (shown on left).
    We also visualize all $kd$ codewords in the supplementary material.
}
\label{fig:codevis}
\end{figure}
\label{subsec:codevis}
In \cref{fig:codevis}, we show images retrieved using our similarity measure \eqref{eq:eval}.
We trained a DIMCO model ($k=16$, $d=4$) on the CIFAR100 dataset.
We select specific code locations and plot the top $10$ test images according to our similarity measure.
For example, the top-1 (leftmost) image for code $(\cdot, j_2, \cdot, j_4)$ would be computed as
\begin{align}
    \argmax_{n \in \{1, \ldots, N\}} \left[ \log p_{2,j_2} (\bx^n) + \log p_{4, j_4} (\bx^n) \right],
\end{align}
where $N$ is the number of test images.

We visualize two different combinations of codes in \cref{fig:codevis}.
The two examples show that using codewords together results in their respective semantic concepts to be combined:
(man + fish = man holding fish),
(round + warm color = orange).
While we visualized combinations of $2$ codewords for clarity, DIMCO itself uses a combination of $d$ such codewords. 
The regularizer described in \cref{subsec:regularization} further encourages 
each of these $d$ codewords to represent different concepts.
The combinatorially many ($k^d$) combinations in which DIMCO can assemble such codewords gives DIMCO sufficient expressive power to solve challenging tasks.

\section{Analysis} \label{sec:analysis}

\subsection{Is Mutual Information a Good Objective?}
\label{subsec:mi_as_objective}

Our learning objective for DIMCO \eqref{eq:loss} is the mutual information between codes and labels.
In this subsection, we justify this choice by showing that many previous objectives are closely related to mutual information.
Due to space constraints, we only show high-level connections here and provide a more detailed exposition in the supplementary material.

\paragraph{Cross-entropy}
The de facto loss for classification is the cross-entropy loss, which is defined as
\begin{align}
\mathrm{xent}(Y, X) 
= -\E_{y \sim Y, x \sim X} \left[ \log q(\widehat{Y} = y | \tX(x)) \right],
\end{align}
where $\widehat{Y}$ is the model's prediction of $Y$.
Using the observation that the final layer $q(\cdot)$ acts a parameterized approximation to the true conditional distribution 
$p(Y|\tX)$, we write this as
\be
\mathrm{xent}(Y, X)
&\approx &\E_{y \sim Y, x \sim X} \left[ -\log p(\y|\tx) \right] \nonumber \\
&=& - I(\tX; Y) + H(Y).
\ee
The $H(Y)$ term can be ignored since it is not affected by model parameters.
Therefore, minimizing cross-entropy is approximately equivalent to maximizing mutual information.
The two objectives become completely equivalent when the final linear layer $q(\cdot)$ perfectly represents the conditional distribution $q(\y|\tx)$.
Note that for discrete $\tx$, we cannot use a linear layer to parameterize $q(\y|\tx)$, and therefore cannot directly optimize the cross-entropy loss.
We can therefore view our loss as a necessary modification of the cross-entropy loss for our setup of using discrete embeddings.

\paragraph{Contrastive Losses}
Many metric learning methods \citep{koch2015siamese,hoffer2015deep,sohn2016improved,movshovitz2017no,duan2018deep} use a contrastive learning objective to learn a continuous embedding ($\tX$).
Such contrastive losses consist of 
(1) a positive term that encourages an embedding to move closer to that of other relevant embeddings and 
(2) a negative term that encourages it to move away from irrelevant embeddings.
The positive term approximately minimizes $\log p(\tX|y)$ while the negative term as approximately minimizes $\log p(\tX)$.
Together, these terms have the combined effect of maximizing
\be
\lefteqn{ \E \left[ \log p(\tx|y) - \log p(\tx) \right] } \nonumber \\
& = -H(\tX|Y) + H(\tX) = I(\tX; Y).
\ee
We show such equivalences in detail in the supplementary material.

In addition to these direct connections to previous loss functions, we show empirically in \cref{subsec:correlation} that the mutual information strongly correlates with both the top-1 accuracy metric for classification and the Recall@1 metric for retrieval.

\subsection{Does Using Discrete Codes Help Generalization?}
\label{subsec:generalization}

In \cref{sec:intro}, we have motivated the use of discrete codes through the regularization effect of an information bottleneck.
In this subsection, we prove a PAC bound to theoretically analyze whether learning discrete codes by maximizing mutual information leads to better generalization.
In particular, we study how the mutual information on the test set is affected by the choice of input dataset structure and code hyperparameters $k, d$.

We analyze DIMCO's characteristics by examining each minibatch.
Following related meta-learning works \citep{amit2017meta,ravi2018amortized}, we call each batch a ``task''.
We emphasize that this is only a difference in naming convention.
The analysis in this subsection applies equally well to the metric learning setup; 
we can view each batch consisting of support and query points as a task.

Define a task $T$ to be a distribution over $\calZ = \calX \times \calY$.
Let tasks $T^1, \ldots, T^n$ be sampled i.i.d. from a distribution of tasks $\tau$.
Each task $T$ consists of a fixed-size dataset $D_T = z_T^1, \ldots, z_T^m = (x_T^1, y_T^1), \ldots, (x_T^m, y_T^m)$, 
which is a set of $m$ i.i.d. samples from the data distribution ($z_T^j \sim T$).
Let $\theta$ be the parameters of DIMCO.
Let $X, Y, \tX$ be the random variables for data, labels, and codes, respectively.
Recall that our objective is the expected mutual information between labels and codes:
\begin{align}
\calL(\tau, \theta) 
= -\E_{T \sim \tau} \left[ I(\tX(X_T, \theta); Y_T) \right].
\label{eq:mi_objective}
\end{align}
The loss that we actually optimize (\cref{eq:marginal_entropy,eq:conditional_entropy}) is the empirical loss
\begin{align}
\widehat{\calL}(T^{1:n}, \theta) 
= - \frac{1}{n} \sum_{i=1}^n \hat{I}(\tX(X_{T^i}, \theta); Y_{T^i}).
\end{align}
The following theorem bounds the difference between these the expected loss $\calL$ and the empirical loss $\hat{\calL}$.

\begin{theorem}[simplified]
\label{thm:gen_bound}
Let $d_\Theta$ be the VC dimension of the encoder $\tX(\cdot)$.
The following inequality holds with high probability:
\begin{eqnarray}
\calL(\tau, \theta) - \widehat{\calL} (T^{1:n}, \theta) 
\leq
O \left( \sqrt{\frac{d_\Theta}{n} \log \frac{n}{d_\Theta}} \right) + \nonumber \\
O\left( \frac{|\tX| \log(m)}{\sqrt{m}} \right) + O\left( \frac{|\tX| |Y|}{m} \right).
\label{eq:bound}
\end{eqnarray}
\end{theorem}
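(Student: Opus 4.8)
The plan is to split the generalization gap into two conceptually distinct pieces and bound each separately: (i) a \emph{meta-level} gap between the expected loss $\calL(\tau,\theta)$ and the average over the $n$ sampled tasks of the \emph{population} mutual information $I(\tX(X_T,\theta);Y_T)$, and (ii) a \emph{within-task} gap between that population mutual information and the plug-in empirical estimate $\hat I$ computed from the $m$ samples in each task. Formally, I would write
\begin{align}
\calL(\tau,\theta) - \widehat{\calL}(T^{1:n},\theta)
&= \underbrace{\Big( \calL(\tau,\theta) + \tfrac1n\sum_i I(\tX(X_{T^i});Y_{T^i}) \Big)}_{\text{(i): task sampling}} \nonumber\\
&\quad + \underbrace{\tfrac1n\sum_i \Big( I(\tX(X_{T^i});Y_{T^i}) - \hat I(\tX(X_{T^i});Y_{T^i}) \Big)}_{\text{(ii): finite-sample MI estimation}}.
\end{align}
For piece (i), the quantity $I(\tX(X_T,\theta);Y_T)$ is a bounded function (in $[0,\,d\log k]$) of the task $T$, and it is realized by the encoder drawn from a hypothesis class whose complexity is controlled by the VC dimension $d_\Theta$. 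A uniform-convergence / structural-risk argument over the encoder class then gives the $O\big(\sqrt{(d_\Theta/n)\log(n/d_\Theta)}\big)$ term; this is the standard VC bound (e.g.\ via Sauer's lemma and symmetrization), adapted so that the "sample" is a task and the "loss" is the population MI of that task. One subtlety here is that the loss is not a $0/1$ loss but a bounded real-valued functional, so I would either invoke a pseudo-dimension/fat-shattering version of the bound or absorb the boundedness constant into the $O(\cdot)$.

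For piece (ii), the key is that the plug-in estimator $\hat I$ of mutual information between two discrete variables with alphabet sizes $|\tX|=k^d$ and $|Y|$ has a known bias and concentration behavior. The bias of the plug-in MI (equivalently, plug-in entropy) is $O(|\tX||Y|/m)$ — this is the classic Miller–Madow-type correction, and it accounts for the last term in \eqref{eq:bound}. The fluctuation of $\hat I$ around its mean is controlled by a bounded-differences (McDiarmid) argument: changing one of the $m$ samples in a task perturbs each empirical probability $\overline p_{ij}$ by $O(1/m)$, hence perturbs each of the $O(|\tX|)$ entropy terms by $O(\log(m)/m)$, giving a per-task deviation of $O(|\tX|\log(m)/\sqrt m)$ with high probability; averaging over the $n$ tasks only helps, so this survives as the middle term. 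I would state a lemma to this effect (plug-in MI estimation error $\le O(|\tX|\log m/\sqrt m) + O(|\tX||Y|/m)$ w.h.p.) and cite or prove it in the supplement.

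The main obstacle I anticipate is piece (i): making the VC-type bound rigorous when the "label" attached to each task is itself the population mutual information, a global functional of the encoder's behavior on the whole task rather than a pointwise prediction. The clean way around this is to note that $\hat I$ is what is actually optimized and is a function of the encoder's outputs on the $m$ points, so one can instead first replace $\calL$ and $\widehat{\calL}$ by their "population-MI-per-task" surrogate and bound the encoder-dependent task-sampling error of that surrogate; the VC dimension of the \emph{code-valued} map $\tX(\cdot)$ (or of the induced partition into $k^d$ cells, whose complexity is $d_\Theta$ up to $\log$ factors) then controls how the per-task surrogate generalizes across tasks. Pinning down exactly which complexity measure ($d_\Theta$ itself, versus $d_\Theta\log|\tX|$) appears, and hiding the resulting logarithmic factors inside the $O(\cdot)$, is the delicate bookkeeping step; everything else is assembling standard entropy-estimation and uniform-convergence lemmas and a union bound over the two high-probability events.
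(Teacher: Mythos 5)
Your proposal is correct and follows essentially the same route as the paper: the same two-part decomposition into a task-sampling (VC-type) term and a per-task finite-sample mutual-information estimation term, combined by the triangle inequality. The only difference is that the paper simply cites the finite-sample MI bound of \citet{shamir2010learning} (whose form matches your Miller--Madow bias plus concentration sketch) rather than proving it, and it asserts the $O\bigl(\sqrt{(d_\Theta/n)\log(n/d_\Theta)}\bigr)$ term without addressing the real-valued-loss subtlety you correctly flag.
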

\begin{proof}
We use VC dimension bounds and a finite sample bound for mutual information \citep{shamir2010learning}.
We defer a detailed statement and proof to the supplementary material.
\end{proof}

First note that all three terms in our generalization gap \eqref{eq:bound} converge to zero as $n,m \rightarrow \infty$.
This shows that training a model by maximizing empirical mutual information as in \cref{eq:marginal_entropy,eq:conditional_entropy} generalizes perfectly in the limit of infinite data.

\Cref{thm:gen_bound} also shows how the generalization gap is affected differently by dataset size $m$ and number of datasets $n$.
A large $n$ directly compensates for using a large backbone ($d_\Theta$), while a large $m$ compensates for using a large final representation ($\tX$).
Put differently, to effectively learn from small datasets ($m$), one should use a small representation $(\tX)$.
The number of datasets $n$ is typically less of a problem because the number of different ways to sample datasets is combinatorially large (e.g., $n>10^{10}$ for miniImagenet $5$-way $1$-shot tasks).
Recall that DIMCO has $|\tX| = d \log k$, meaning that we can control the latter two terms using our hyperparameters $d, k$.
We have motivated the use of discrete codes through the information bottleneck effect of small codes $\tX$, and \cref{thm:gen_bound} confirms this intuition.

\section{Related Work} \label{sec:previous}
\paragraph{Information Bottleneck}
\nocite{alemi2016deep,goyal2019infobot}
DIMCO and \cref{thm:gen_bound} are both close in spirit to the information bottleneck (IB) principle \citep{tishby2000information,tishby2015deep,shwartz2017opening}.
IB finds a set of compact representatives $\tX$ while maintaining sufficient information about $Y$,
minimizing the following objective function
\begin{align}
\label{eq:ibf}
\calJ( p(\tx | \x) ) 
= I(\tX; X) - \beta I(\tX; Y),
\end{align}
subject to $\sum_{\tx} p(\tx|\x) = 1$.
Equivalently, it can be stated that one maximizes $I(\tX; Y)$ while simultaneously minimizing $I(\tX; X)$.
Similarly, our objective \eqref{eq:mi_objective} is information maximization $I(\tX; Y)$,
while our bound \eqref{eq:bound} suggests that the representation capacity $|\tX|$ should be low for generalization.
In the deterministic information bottleneck \citep{strouse2017deterministic}, $I(\tX; X)$ is replaced by $H(\tX)$.
These three approaches to generalization are related via the chain of inequalities $I(\tX; X) \leq H(\tX) \leq \log |\tX|$,
which is tight in the limit of $\tX$ being imcompressible.
For any finite representation, i.e., $|\tX|=N$, the limit $\beta \rightarrow \infty$ in \eqref{eq:ibf} yields a hard partitioning of $X$ into $N$ disjoint sets.
DIMCO uses the infomax principle to learn $N=k^d$ such representatives, which are arranged by $k$-way $d$-dimensional discrete codes for compact representation with sufficient information on $Y$.

\paragraph{Meta-Regularization}
\nocite{finn2017model,finn2017meta,amit2017meta,triantafillou2019meta}
Previous meta-learning methods have restricted task-specific learning by
learning only a subset of the network \citep{Lee2018},
learning on a low-dimensional latent space \citep{rusu2018meta},
learning on a meta-learned prior distribution of parameters \citep{kim2018bayesian},
and learning context vectors instead of model parameters \citep{zintgraf2018caml}.
Our analysis in \cref{thm:gen_bound} suggests that reducing the expressive power of the task-specific learner has a meta-regularizing effect,
indirectly giving theoretic support for previous works that benefitted from reducing the expressive power of task-specific learners.

\begin{figure}
\centering
\includegraphics[width=.85\linewidth]{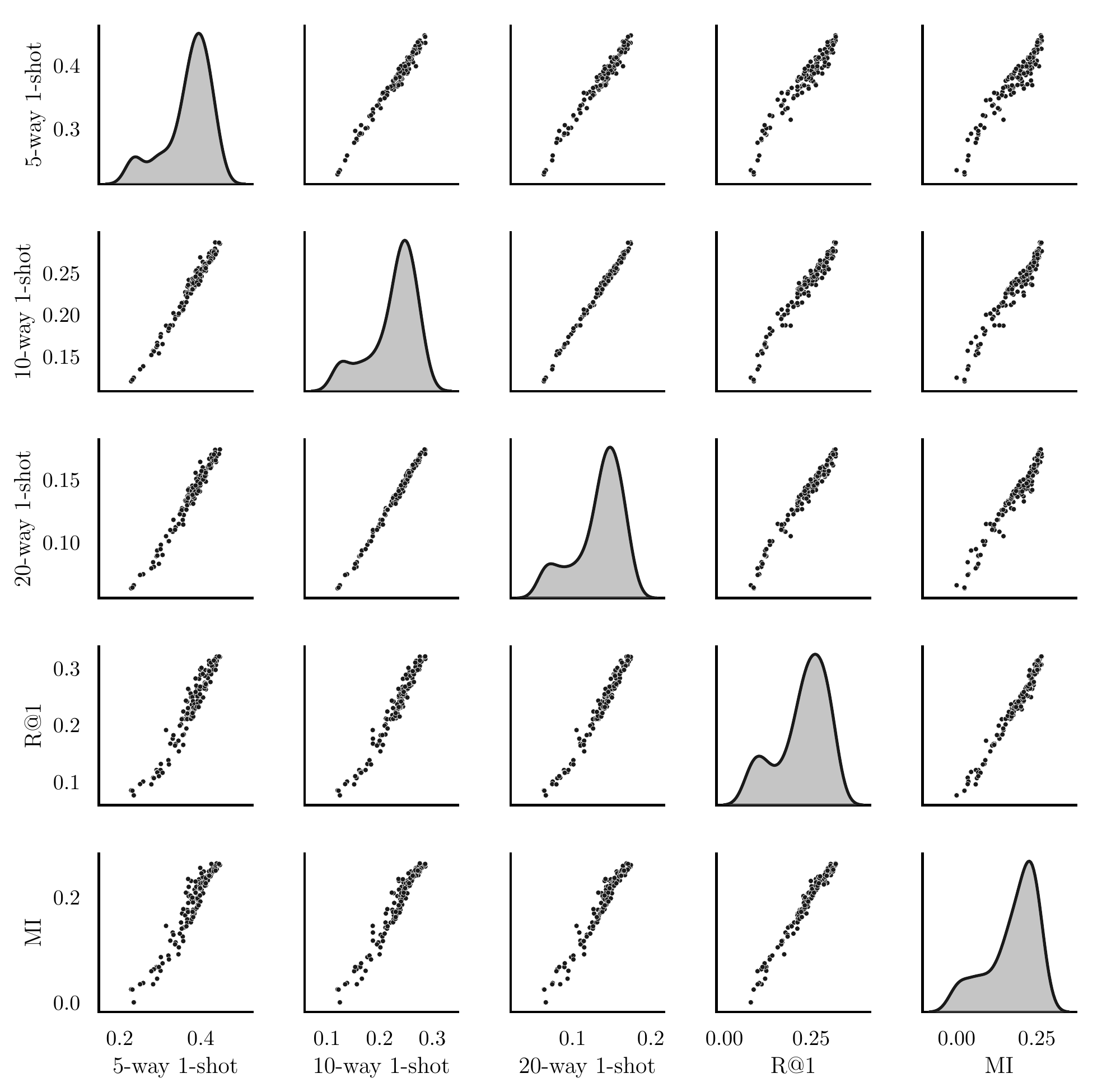}
\vspace{-5pt}
\caption{
    Pairwise correlation between $\textrm{MI}=I(X; \tX)$ and previous metrics.
    Best viewed zoomed in.
}
\vspace{-10pt}
\label{fig:corr}
\end{figure}

\paragraph{Discrete Representations}
Discrete representations have been thoroughly studied in information theory \citep{shannon1948mathematical}.
Recent deep learning methods directly learn discrete representations by learning generative models with discrete latent variables \citep{rolfe2016discrete,van2017neural,razavi2019generating}
or maximizing the mutual information between representation and data \citep{hu2017learning}.
DIMCO is related to but differs from these works as it assumes a supervised meta-learning setting and performs infomax using \textit{labels} instead of data.

A standard approach to learning label-aware discrete codes is to first learn continuous embeddings and then quantize it using an objective that maximally preserves its information \citep{gray1998quantization,jegou2011product,gong2012iterative}.
DIMCO can be seen as an end-to-end alternative to quantization which directly learns discrete codes.
\citet{jeong2018efficient} similarly learns a sparse binary code in an end-to-end fashion by solving a minimum cost flow problem with respect to labels.
Their method differs from DIMCO, which learns a dense discrete code by optimizing $I(\tX; Y)$, which we estimate with a closed-form formula.

\paragraph{Metric Learning}
The structure and loss function of DIMCO is closely related to that of metric learning methods \citep{hoffer2015deep,sohn2016improved,vinyals2016matching,duan2018deep}.
We show that the loss functions of these methods can be seen as approximation to the mutual information ($I(\tX; Y)$) in \cref{subsec:loss}, and provide a more in-depth exposition in the supplementary material.
While all of these previous methods require a support/query split within each batch,
DIMCO simply optimizes an information-theoretic quantity of each batch, removing the need for such structured batch construction.

\paragraph{Information Theory and Representation Learning}
Many works have applied information-theoretic principles to unsupervised representation learning:
to derive an objective for GANs to learn disentangled features \citep{chen2016infogan},
to analyze the evidence lower bound (ELBO) \citep{alemi2017fixing,chen2018isolating},
and to directly learn representations \citep{alemi2016deep,hjelm2018learning,oord2018representation,grover2018uncertainty,choi2019neural}.
DIMCO is also an information-theoretic representation learning method, but we instead assume a supervised learning setup where the representation must reflect ground-truth labels.
We also used previous results from information theory to prove a generalization bound for our representation learning method.

\section{Experiments} \label{sec:experiments}
\begin{table}[ht!]
\vspace{-7pt}
\caption{
    Top-1 accuracies under various compressed CIFAR-10/100 embeddings with size ($k, d$).
    The original embeddings were continuous $64$-dimensional embeddings ($k \approx 2^{32}, d=64$).
}
\vspace{4pt}
\label{tab:comp_cifar_small}
\centering \begin{tabular}{
    ll
    cccc
    }
\toprule
    & 
    & \multicolumn{2}{c}{CIFAR-10} & \multicolumn{2}{c}{CIFAR-100} \\
      \cmidrule(l{4pt}r{4pt}){3-4}   \cmidrule(l{4pt}r{4pt}){5-6}

    & & $d=4$ & $d=16$ & $d=4$ & $d=16$ \\
\midrule PQ
    & {$k=2$} & 50.38 & 87.24 & 7.77 & 32.47 \\ 
    & {$k=4$} & 86.86 & 90.92 & 18.78 & 52.02 \\ 
    & {$k=8$} & 90.44 & 91.49 & 31.04 & 58.82 \\ 
    & {$k=16$}& 91.02 & 91.52 & 43.16 & 61.00 \\
\midrule DIMCO 
    & {$k=2$} & 64.43 & 88.85 & 10.03 & 33.46 \\ (ours) 
    & {$k=4$} & 88.47 & 91.04 & 25.7  & 53.36 \\ 
    & {$k=8$} & 91.29 & 91.45 & 46.22 & 58.84 \\ 
    & {$k=16$} & \BF 91.68 & \BF 91.57 & \BF 57.83 & \BF 62.49 \\
\bottomrule
\end{tabular}
\caption{
    Top-$1$ accuracies and compression rates of ImageNet embeddings under various compressed embedding sizes ($k, d$).
    The compression rate is the ratio between uncompressed and compressed sizes; it is calculated as $\frac{32 \cdot 2048}{d \log k}$.
}
\vspace{4pt}
\centering \begin{tabular}{
    ll
    S[table-format=5.0, detect-weight=true]
    S[table-format=2.2, round-precision=2, detect-weight=true]
    }
\toprule
Method  & ($k, d$)    & {Compression Rate} 
                                & {Accuracy}  \\
\midrule
SQ      & ($2, 2048$) & 32      & 0.10        \\
SQ      & ($4, 2048$) & 16      & 0.28        \\
SQ      & ($8, 2048$) & 10      & 12.36       \\
SQ      & ($16, 2048$)& 8       & 57.80       \\
PQ      & ($2, 2$)    & 32768   & 0.21        \\
PQ      & ($4, 4$)    & 8192    & 0.92        \\
PQ      & ($8, 8$)    & 2730    & 11.91       \\
PQ      & ($16, 16$)  & 1024    & 44.93       \\
\midrule
DIMCO   & ($2, 2$)    & 32768   & 0.14        \\
DIMCO   & ($4, 4$)    & 8192    & 2.68        \\
DIMCO   & ($8, 8$)    & 2730    & 33.28       \\
DIMCO   & ($16, 16$)  & 1024    & \BF 63.64   \\
\bottomrule
\end{tabular}
\vspace{-5pt}
\label{tab:comp_imagenet}
\end{table}

\begin{table}[t]
\vspace{-7pt}
\caption{
    Few-shot classification accuracies on the miniImageNet benchmark.
    $\dagger$ denotes transductive methods, which are more expressive by taking unlabeled examples into account.
}
\label{tab:fewshot}
\vspace{3pt}
\centering \begin{tabular}{lcc}
\toprule                    %
    Method                  & $5$-way $1$-shot      & $5$-way $5$-shot      \\
\midrule %
    $^\dagger$ TPN \nocite{liu2018transductive}
                            & 55.51 $\pm$ 0.86      & 69.86 $\pm$ 0.65      \\
    $^\dagger$ FEAT \nocite{ye2018learning}
                            & 55.75 $\pm$ 0.20      & 72.17 $\pm$ 0.16      \\
\midrule
    MetaLSTM \nocite{ravi2016optimization}             
                            & 43.44 $\pm$ 0.77      & 60.60 $\pm$ 0.71      \\
    MatchingNet \nocite{vinyals2016matching}             
                            & 43.56 $\pm$ 0.84      & 55.31 $\pm$ 0.73      \\
    ProtoNet \nocite{snell2017prototypical}               
                            & 49.42 $\pm$ 0.78      & 68.20 $\pm$ 0.66      \\
    RelationNet \nocite{sung2018learning}               
                            & 50.44 $\pm$ 0.82      & 65.32 $\pm$ 0.70      \\
    R2D2 \nocite{bertinetto2018meta}               
                            & 51.2 $\pm$ 0.6      & \BF 68.8 $\pm$ 0.1      \\
    MetaOptNet-SVM \nocite{lee2019meta}          
                            & 52.87 $\pm$ 0.57      & 68.76 $\pm$ 0.48  \\
\midrule
    DIMCO ($64, 64$)        & 47.33 $\pm$ 0.46      & 61.59 $\pm$ 0.52      \\
    DIMCO ($64, 128$)       & \BF 53.29 $\pm$ 0.47  & 64.79 $\pm$ 0.57      \\
\bottomrule
\end{tabular}
\end{table}

\begin{table*}[t]
\vspace{-7pt}
\caption{
    Image retrieval performance on CUB-200-2011 and Cars-196, measured by Recall@1.
    Memory is the number of bits that an embedding vector of each image uses. 
    Time is seconds taken to retrieve a single query from database 
    (5,924 and 8,131 images for CUB-200-2011 and Cars-196, respectively).
}
\label{tab:metric_learning}
\vspace{5pt}
\centering \begin{tabular}{llc cc cc}
\toprule
    & & & \multicolumn{2}{c}{CUB-200-2011} & \multicolumn{2}{c}{Cars-196} \\ 
    \cmidrule(l{4pt}r{4pt}){4-5} \cmidrule(l{4pt}r{4pt}){6-7}  
    Method & ($k,d$) & Memory [bits]
                            & Recall@1 & Time [s]
                                            & Recall@1 & Time [s] \\
\midrule
    Binomial Deviance %
    & (-, 128)      & 4096  & 57.25 & 16.37 & 72.53 & 21.86 \\
    Triplet %
    & (-, 128)      & 4096  & 56.80 & 16.37 & 73.79 & 21.86 \\
    Proxy-NCA %
    & (-, 128)      & 4096  & 56.19 & 16.37 & 75.94 & 21.86 \\ 
\midrule
    \multirow{4}{*}{DIMCO (ours)} 
    & (32, 32)      & 160   & 51.04 & 1.48 & 63.44 & 2.64   \\ 
    & (64, 64)      & 384   & 55.78 & 2.85 & 72.06 & 6.01   \\ 
    & (128, 128)    & 896   & 58.05 & 5.81 & 76.04 & 11.92 \\ 
    & (256, 256)    & 2048  & \BF 58.90 & 12.20 & \BF 77.32 & 16.04 \\
\bottomrule
\vspace{-20pt}
\end{tabular}
\end{table*}

In our experiments, we use datasets with varying degrees of complexity:
CIFAR10/100~\citep{Krizhevsky09learningmultiple}, 
miniImageNet~\citep{vinyals2016matching}, 
CUB200~\citep{wah2011caltech}, 
Cars196~\citep{KrauseStarkDengFei-Fei_3DRR2013}, 
and ImageNet (ILSVRC-2012-CLS, \citet{imagenet_cvpr09}).
We use standard train/test splits for each dataset unless stated otherwise.
We also use various network architectures: 
4-layer convnet~\citep{vinyals2016matching} 
and ResNet12/20/50~\citep{he2016deep,mishra2017simple}.
We followed previously reported experimental setups as closely as possible,
and provide minor experiment details in the supplementary material.

\subsection{Correlation of Metrics} \label{subsec:correlation}
We have shown in \cref{subsec:mi_as_objective} that the mutual information $I(\tX; Y)$ is strongly connected to previous loss functions for classification and retrieval.
In this subsection, we perform experiments to verify whether $I(\tX; Y)$ is a good \textit{metric} that quantitatively shows the quality of the representation $\tX$.
We trained DIMCO on the miniImageNet dataset with $k = d = 64$ for $20$ epochs.
We plot the pairwise correlations between five different metrics:
($5, 10, 20$)-way $1$-shot accuracy, $\textrm{Recall@}1$, and $I(\tX; Y)$.
The results in \cref{fig:corr} show that all five metrics are very strongly correlated.
We observed similar trends when training with loss functions other than $I(\tX; Y)$ as well; 
we show these experiments in the appendix due to space constraints.

\subsection{Label-Aware Bit Compression}
We applied DIMCO to compressing feature vectors of trained classifier networks.
We obtained penultimate embeddings of ResNet20 networks each trained on CIFAR10 and CIFAR100.
The two networks had top-1 accuracies of $91.65$ and $66.61$, respectively.
We trained on embeddings for the train set of each dataset, and measured top-1 accuracy of the test set using the training set as support.
We compare DIMCO to product quantization (PQ, \citet{jegou2011product}), which similarly compresses a given embededing to a $k$-way $d$-dimensional code.
We compare the two methods in \cref{tab:comp_cifar_small} with the same range of $k, d$ hyperparameters.
We performed the same experiment on the larger ImageNet dataset with a ResNet50 network which had a top-1 accuracy of $76.00$.
We compare DIMCO to both adaptive scalar quantization (SQ) and PQ in \cref{tab:comp_imagenet}.
We show extended experiments for all three datasets in the supplementary material.

The results in \cref{tab:comp_cifar_small} and \cref{tab:comp_imagenet} demonstrate that DIMCO consistently outperforms PQ, and is especially efficient when $d$ is low.
Furthermore, the ImageNet experiment (\cref{tab:comp_imagenet}) shows that DIMCO even outperforms SQ, which has a much lower compression rate compared to the embedding sizes we consider for DIMCO.
These results are likely due to DIMCO performing \textit{label-aware compression} where it compresses the embedding while taking the label into account, whereas PQ and SQ only compress the embeddings themselves.

\subsection{Few-shot Classification} \label{subsec:fewshot}
We evaluated DIMCO's few-shot classification performance on the miniImageNet dataset. 
We compare against the following previous works: 
\citet{
    ravi2016optimization,
    vinyals2016matching,
    snell2017prototypical,
    sung2018learning,
    bertinetto2018meta,
    liu2018transductive,
    ye2018learning,
    lee2019meta}.
All methods use the standard four-layer convnet with $64$ filters per layer\footnote{
    some methods used more filters; we used $64$ for fairness.
}.
We use the data augmentation scheme proposed by \citet{lee2019meta} and use balanced batches of $100$ images consisting of $10$ different classes.
We evaluate on both $5$-way $1$-shot and $5$-way $5$-shot learning, and report $95\%$ confidence intervals of $1000$ random episodes on the test split.

Results are shown in \cref{tab:fewshot}, and we provide an extended table with an alternative backbone in the supplementary material.
\Cref{tab:fewshot} shows that DIMCO outperforms previous works on the $5$-way $1$-shot benchmark.
DIMCO's $5$-way $5$-shot performance is relatively low, likely because the similarity metric (\cref{subsec:similarity}) handles support datapoints individually instead of aggregating them, similarly to Matching Nets \citep{vinyals2016matching}.
Additionally, other methods are explicitly trained to optimize $5$-shot performance, whereas DIMCO's training procedure is the same regardless of task structure.

\subsection{Image Retrieval} \label{subsec:retrieval}
We conducted image retrieval experiments using two standard benchmark datasets: CUB-200-2011 and Cars-196.
As baselines, we consider three widely adopted metric learning methods:
Binomial Deviance \citep{yideep2014},
Triplet loss \citep{hoffer2015deep},
and Proxy-NCA \citep{movshovitz2017no}.
The backbone for all methods was a ResNet-50 network pretrained on the ImageNet dataset.
We trained DIMCO on various combinations of $(p, d)$, and set the embedding dimension of the baseline methods to 128.
We measured the time per query for each method on a Xeon E5-2650 CPU without any parallelization.
We note that computing the retrieval time using a parallel implementation would skew the results even more in favor of DIMCO since DIMCO's evaluation is simply one memory access followed by a sum.

Results presented in \cref{tab:metric_learning} show that DIMCO outperforms all three baseline,
and that the compact code of DIMCO takes roughly an order of magnitude less memory, and requires less query time as well.
This experiment also demonstrates that discrete representations can outperform modern methods that use continuous embeddings, even on this relatively large-scale task.
Additionally, this experiment shows that DIMCO can train using large backbones without significantly overfitting.

\section{Conclusion} \label{sec:discussion}
We introduced DIMCO, a model that learns a discrete representation of data by directly optimizing the mutual information with the label.
To evaluate our initial intuition that shorter representations generalize better between tasks, we provided generalization bounds that get tighter as the representation gets shorter.
Our thorough experiments demonstrated that DIMCO is effective at both compressing a continuous embedding, and also at learning a discrete embedding from scratch in an end-to-end manner.
The discrete embeddings of DIMCO outperformed recent continuous methods while also being more efficient in terms of both memory and time.
We believe the tradeoff between discrete and continuous embeddings is an exciting area for future research.

DIMCO was motivated by concepts such as the minimum description length (MDL) principle and the information bottleneck:
compact \textit{task representations} should have less room to overfit.
Interestingly, \citet{yin2019meta} reports that doing the opposite%
\textemdash%
regularizing the \textit{task-general parameters}%
\textemdash
prevents meta-overfitting by discouraging the meta-learning model from memorizing the given set of tasks.
In future work, we will investigate the common principle underlying these seemingly contradictory approaches for a fuller understanding of meta-generalization.

\bibliographystyle{preamble/icml2020} 
\bibliography{main}

\onecolumn
\appendix

\icmltitle{Supplementary Material}

\section{Previous Loss functions Are Approximations to Mutual Information}
\label{app:losses}
\paragraph{Cross-entropy Loss} \label{subsec:xent}
The cross-entropy loss has directly been used for few-shot classification \citep{vinyals2016matching,snell2017prototypical}.

Let $q(\y|\tx ;\phi)$ be a parameterized prediction of $\y$ given $\tx$, which tries to approximate the true conditional distribution $q(\y|\tx)$.
Typically in a classification network, $\phi$ is the parameters of a learned projection matrix and $q(\cdot)$ is the final linear layer.
The expected cross-entropy loss can be written as
\be
\mathrm{xent}(Y, \tX) = \E_{\y \sim Y, \tx \sim \tX} \left[ -\log q(\y|\tx, \phi) \right].
\label{eq:xent}
\ee
Assuming that the approximate distribution $q(\cdot)$ is sufficiently close to $p(\y|\tx)$, 
minimizing (\ref{eq:xent}) can be seen as
\be
\argmin \mathrm{xent}(Y, \tX) 
&\approx& \argmin \E_{\y \sim Y, \tx \sim \tX} \left[ -\log p(\y|\tx) \right] \\
&=& \argmin H(Y|\tX)
= \argmax I(\tX; Y),
\ee
where the last equality uses the fact that $H(Y)$ is independent of model parameters.
Therefore, cross-entropy minimization is approximate maximization of the mutual information between representation $\tX$ and labels $Y$.

The approximation is that we parameterized $q(\y|\tx;\phi)$ as a linear projection.
This structure cannot generalize to new classes because the parameters $\phi$ are specific to the labels $\y$ seen during training.
For a model to generalize to unseen classes, one must amortize the learning of this approximate conditional distribution.
\citep{vinyals2016matching,snell2017prototypical} sidestepped this issue by using the embeddings for each class as $\phi$.

\paragraph{Triplet Loss}
The Triplet loss \citep{hoffer2015deep} is defined as
\be
\triloss = \norm{\tx_q-\tx_p}_2^2 - \norm{\tx_q-\tx_n}_2^2,
\ee
where $\tx_q, \tx_p, \tx_n \in \mathbb{R}^d$ are the embedding vectors of query, positive, and negative images.
Let $\y_q$ denote the label of the query data.
Recall that the pdf function of a unit Gaussian is
$
\log N(\tx|\mu, 1) 
= -c_1 - c_2\norm{\tx-\mu}_2^2, 
$
where $c_1,c_2$ are constants.
Let $p_p(\tx)=N(\tx_p, 1)$ and $p_n(\tx)=N(\tx_n, 1)$ be unit Gaussian distributions centered at $\tx_p, \tx_n$ respectively.
We have
\be
\E \left[ -\triloss \right]
&\propto& \E \left[ \log p_p(\tx) - \log p_n(\tx)\right] \\
&\approx& \E \left[ \log p_p(\tx) - \log p(\tx)\right] \\
&=& -H(\tX|Y) + H(\tX) = I(\tX;Y).
\ee
Two approximations were made in the process.
We first assumed that the embedding distribution of images not in $\y_q$ is equal to the distribution of all embeddings.
This is reasonable when each class only represents a small fraction of the full data.
We also approximated the embedding distributions $p(\tx|\y), p(\tx)$ with unit Gaussian distributions centered at single samples from each.

\paragraph{N-pair Loss}
Multiclass $N$-pair loss \citep{sohn2016improved} was proposed as an alternative to Triplet loss.
This loss function requires one positive embedding $\tx^+$ and multiple negative embeddings $\tx_1,$ $\ldots,$ $\tx_{N-1}$,
and takes the form
\be 
- \log \frac{\exp (\tx^\top \tx^+)}{\exp(\tx^\top \tx^+) + \sum_{i=1}^{N-1} \exp(\tx^\top \tx_i)}.
\ee
This can be seen as the cross-entropy loss applied to $\textrm{softmax}(\tx^\top \tx^+, \tx^\top \tx_1, \ldots, \tx^\top \tx_{N-1})$.

Following the same logic as the cross-entropy loss, this is also an approximation to $I(\tX; Y)$.
This objective should have less variance than Triplet loss since it approximates $p(\tx)$ using more examples.

\paragraph{Adversarial Metric Learning}
Deep Adversarial Metric Learning \citep{duan2018deep} tackles the problem of most negative exmples being uninformative by directly generating meaningful negative embeddings.
This model employs a \textit{generator} which takes as input the embeddings of anchor, positive, and negative images.
The generator then outputs a "synthetic negative" embedding that is hard to distinguish from a positive embedding while being close to the negative embedding.

This can be seen as optimizing
\be 
\E \left[ \log p_p(\tx) - \log p(\tx)\right]
= I(\tX; Y)
\ee
by estimating $p(\tx)$ using a generative network rather than directly from samples.
Rather than modelling the marginal distribution $p(\tx)$, this method conditionally models $p(\tx;\tx_q,\tx_p,\tx_n)$ so that $\tx$ is hard to distinguish from $\tx_p$ while sufficiently close to both $\tx_q$ and $\tx_n$.

\section{Proof of Theorem 1}
\label{app:analysis}
\setcounter{theorem}{0}

We restate and prove our main theorem.

\begin{theorem}
Let $d_\Theta$ be the VC dimension of the encoder $\tX(\cdot)$.
Let $\hat{I}(\tX(X_T, \theta); Y_T)$ be the empirical estimate of the mutual information using finite dataset $D_T$, 
and define empirical loss as
\begin{eqnarray}
\hat{\calL}(T^{1:n}, \theta) = - \frac{1}{n} \sum_{i=1}^n \hat{I}(\tX(X_{T^i}, \theta); Y_{T^i}).
\end{eqnarray}
The following inequality holds with high probability:
\begin{eqnarray}
\calL(\tau, \theta) - \hat{\calL} (T^{1:n}, \theta) 
\leq
O \left( \sqrt{\frac{d_\Theta}{n} \log \frac{n}{d_\Theta}} \right) + 
O\left( \frac{|\tX| \log(m)}{\sqrt{m}} \right) + O\left( \frac{|\tX| |Y|}{m} \right)
\label{eq:bound}
\end{eqnarray}
\end{theorem}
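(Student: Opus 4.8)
The plan is to bound the generalization gap by splitting it into a \emph{meta-level} term (fluctuation over the $n$ sampled tasks) and a \emph{within-task} term (error of the plug-in mutual-information estimate from $m$ samples). Concretely, introduce the intermediate quantity
\begin{align}
\calL_n(\theta) = -\frac{1}{n}\sum_{i=1}^n I\big(\tX(X_{T^i},\theta); Y_{T^i}\big),
\end{align}
the average of the \emph{true} within-task mutual informations over the realized tasks $T^1,\dots,T^n$, and write
\begin{align}
\calL(\tau,\theta) - \hat{\calL}(T^{1:n},\theta)
= \underbrace{\big(\calL(\tau,\theta) - \calL_n(\theta)\big)}_{\text{(A)}}
+ \underbrace{\big(\calL_n(\theta) - \hat{\calL}(T^{1:n},\theta)\big)}_{\text{(B)}}.
\end{align}
Term (A) is a one-sided uniform deviation of an empirical average (over tasks) of a bounded quantity from its expectation; term (B) is an average over tasks of $\hat I - I$, the (positively biased) plug-in estimation error within each task.

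For term (A): the function $T\mapsto I(\tX(X_T,\theta);Y_T)$ takes values in $[0,\log|\tX|]$, and as $\theta$ ranges over $\Theta$ these functions form a class whose statistical complexity is controlled by the VC (or Natarajan/graph) dimension $d_\Theta$ of the encoder. I would bound the relevant growth function by composing the Sauer--Shelah bound for the multiclass encoder with the observation that $I(\tX;Y)$ on a task is a fixed, bounded, Lipschitz functional of the $|\tX|\times|Y|$ histogram of code/label co-occurrences induced by the encoder's $\arg\max$ partition, so passing to the mutual information does not inflate the complexity beyond logarithmic factors. A standard VC-type uniform-convergence bound — or, equivalently, discretizing the bounded range of $I$ and taking a covering/union-bound argument over the finitely many induced partitions — then yields
\begin{align}
\sup_{\theta\in\Theta}\big(\calL(\tau,\theta) - \calL_n(\theta)\big)
\le O\!\left(\sqrt{\tfrac{d_\Theta}{n}\log\tfrac{n}{d_\Theta}}\right)
\end{align}
with high probability, the $\log|\tX|$ range of the loss being absorbed into $O(\cdot)$.

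For term (B): for each task $T^i$ the $m$ points of $D_{T^i}$ are i.i.d., so $\hat I(\tX(X_{T^i},\theta);Y_{T^i})$ is the plug-in estimate of a mutual information between discrete variables with alphabets of sizes $|\tX|$ and $|Y|$. I would invoke the finite-sample analysis of \citet{shamir2010learning}: the plug-in error splits into a concentration term — bounded via McDiarmid/bounded differences applied to the plug-in entropy estimates, each of which changes by $O(\log m/m)$ under a single-sample perturbation — of order $O\!\big(\tfrac{|\tX|\log m}{\sqrt m}\big)$, plus a bias term of order $O\!\big(\tfrac{|\tX||Y|}{m}\big)$. Because this bound depends only on the alphabet sizes and $m$, it holds \emph{simultaneously} for every $\theta$ and every task, so no further union bound over $\Theta$ is needed, and averaging over $i=1,\dots,n$ preserves the rate and contributes exactly the second and third terms of \eqref{eq:bound}. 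A final union bound over the two high-probability events of (A) and (B), folding lower-order logarithmic factors into the $O(\cdot)$, gives the claimed inequality.

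The main obstacle I expect is term (A): converting the combinatorial complexity of the encoder (a multiclass map into $\{1,\dots,k\}^d$) into a uniform-convergence statement for a real-valued, nonlinear loss (the mutual information) rather than a $0/1$ loss. The clean route is to exploit that $I(\tX(\cdot,\theta);Y)$ depends on $\theta$ only through the encoder-induced partition of $\calX$: bounding the number of distinct partitions realizable on a sample via the (Natarajan) Sauer--Shelah lemma and then discretizing the bounded range of $I$ reduces the problem to a finite union bound, at which point Hoeffding over tasks closes it. One should also verify that the plug-in mutual-information bound of \citet{shamir2010learning} is genuinely distribution-free, which is what lets (B) be uniform in $\theta$ for free.
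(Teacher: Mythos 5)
Your proposal follows essentially the same route as the paper's proof: the same decomposition through the intermediate quantity $-\frac{1}{n}\sum_{i=1}^n I(\tX(X_{T^i},\theta);Y_{T^i})$, the same finite-sample mutual-information lemma of \citet{shamir2010learning} for the within-task plug-in error, and the same VC-dimension uniform-convergence bound over tasks. If anything, you give more justification for the task-level VC step (via the encoder-induced partitions and discretization of the bounded range of $I$) than the paper, which simply asserts the $O\left(\sqrt{\frac{d_\Theta}{n}\log\frac{n}{d_\Theta}}\right)$ bound.
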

\begin{proof}

We use the following lemma from \cite{shamir2010learning}, which we restate using our notation.
\begin{lemma}
Let $\tX$ be a random mapping of $X$.
Let $D$ be a sample of size $m$ drawn from the joint probability distribution $p(X, Y)$.
Denote the empirical mutual information observed from $D$ between $\tX$ and $Y$ as $\hat{I}(\tX; Y)$.
For any $\delta \in (0, 1)$, the following holds with probability at least $1 - \delta$:
\begin{eqnarray}
|I(\tX; Y)-\hat{I}(\tX; Y)|
\leq \frac{(3|\tX|+2) \log (m) \sqrt{\log (4 / \delta)}}{\sqrt{2 m}}
+ \frac{(|Y|+1)(|\tX|+1)-4}{m}
\end{eqnarray}
\end{lemma}

We simplify this and plug in our specific quantities of interest ($\tX(X_T, \theta)$, $Y_T$):
\begin{eqnarray}
\label{eq:mi_bound}
\abs{ I\left(\tX(X_T, \theta); Y_T \right) - \hat{I}\left( \tX(X_T, \theta); Y_T \right)}
\leq O\left( \frac{|\tX| \log(m)}{\sqrt{m}} \right) + O\left( \frac{|\tX| |Y|}{m} \right).
\end{eqnarray}

We similarly bound the error caused by estimating $\calL$ with a finite number of tasks sampled from $\tau$.
Denote the finite sample estimate of $\calL$ as
\begin{eqnarray}
\hat{\calL}(\tau, \theta) 
= - \frac{1}{n} \sum_{i=1}^n I(\tX(X_{T^i}, \theta); Y_{T^i}).
\end{eqnarray}

Let the mapping $X \mapsto \tX$ be parameterized by $\theta \in \Theta$ and let this model have VC dimension $d_\Theta$.
Using $d_\Theta$, we can state that with high probability,
\begin{eqnarray}
\label{eq:task_bound}
\abs{ \calL(\tau, \theta) - \hat{\calL}(\tau, \theta) }
\leq O \left( \sqrt{\frac{d_\Theta}{n} \log \frac{n}{d_\Theta}} \right),
\end{eqnarray}
where $d_\Theta$ is the VC dimension of hypothesis class $\Theta$.

Combining equations (\ref{eq:task_bound}, \ref{eq:mi_bound}), we have with high probability
\begin{align}
\bigg| \calL(\tau, \theta) &- \left(- \frac{1}{n} \sum_{i=1}^n \hat{I}(\tX(X_{T^i}, \theta); Y_{T^i}) \right) \bigg| \\
\leq &
\abs{ \calL(\tau, \theta) - \hat{\calL}(\tau, \theta) }
+ O\left( \frac{|\tX| \log(m)}{\sqrt{m}} \right) + O\left( \frac{|\tX| |Y|}{m} \right) \\
\leq &
O \left( \sqrt{\frac{d_\Theta}{n} \log \frac{n}{d_\Theta}} \right) + 
O\left( \frac{|\tX| \log(m)}{\sqrt{m}} \right) + O\left( \frac{|\tX| |Y|}{m} \right)
\end{align}
\end{proof}
\newpage
\begin{figure}
	\centering
	\includegraphics[width=\linewidth]{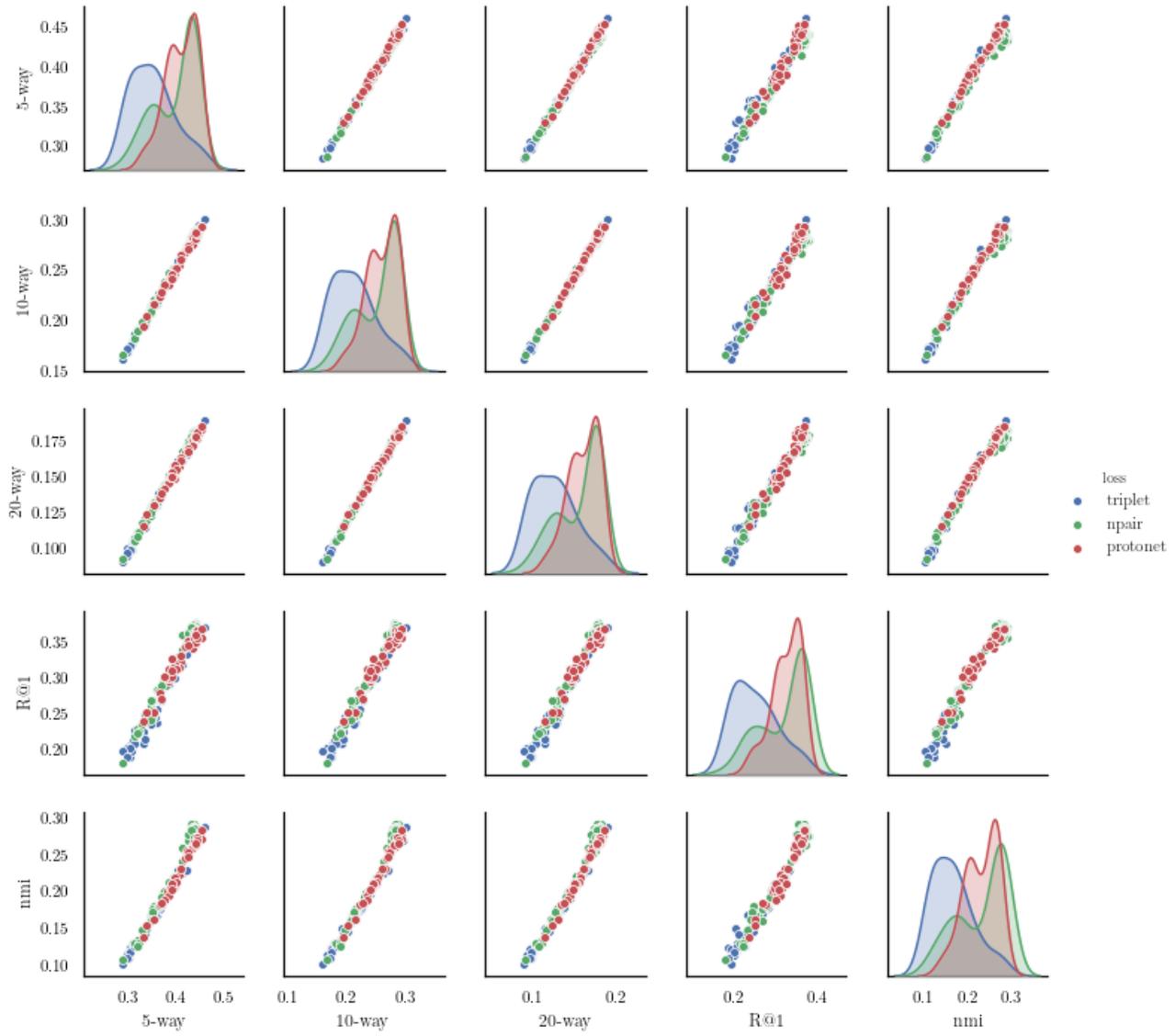}
	\caption{Correlation between few-shot accuracy and retrieval measures.}
	\label{fig:corr_all}
\end{figure}

\begin{figure}[t!]
\centering
\begin{subfigure}[t]{0.4\linewidth}
    \centering \includegraphics[width=\linewidth]{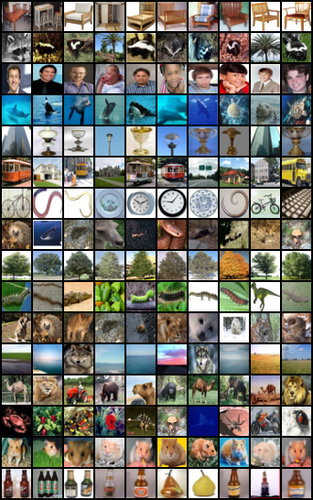} 
\end{subfigure}
\hspace{10pt}
\begin{subfigure}[t]{0.4\linewidth}
    \centering \includegraphics[width=\linewidth]{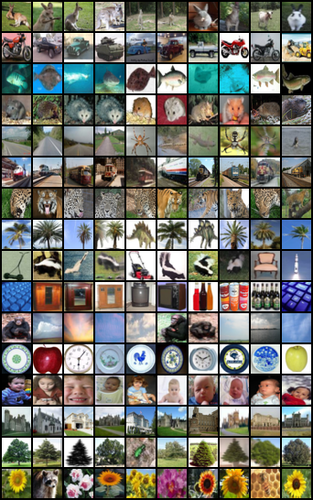} 
\end{subfigure} \\
\vspace{10pt}
\begin{subfigure}[t]{0.4\linewidth}
    \centering \includegraphics[width=\linewidth]{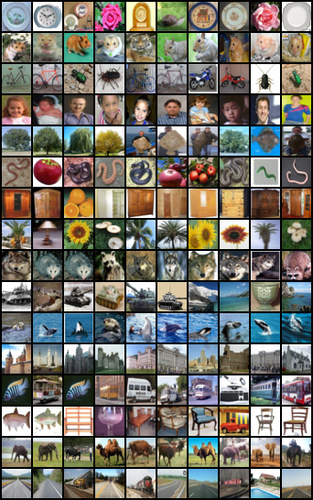} 
\end{subfigure}
\hspace{10pt}
\begin{subfigure}[t]{0.4\linewidth}
    \centering \includegraphics[width=\linewidth]{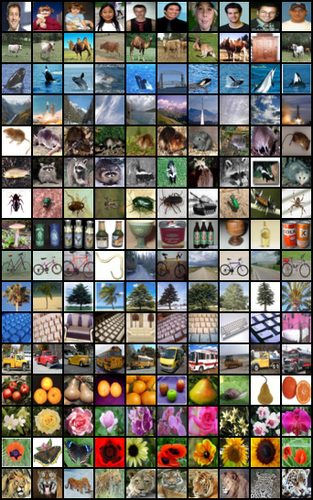} 
\end{subfigure}
\caption{
    Visualization of codes of a DIMCO model ($k=16$, $d=4$) trained on CIFAR100.
    For each of the $kd=64$ codewords, we show the top $10$ images from the \textit{test set} that assign it the highest marginal probability in each row.
}
\label{fig:code_vis_app}
\end{figure}

\begin{table*}[t]
\centering \begin{tabular}{
    ll
    cccc
    cccc
    }
\toprule
    & & \multicolumn{4}{c}{CIFAR-10}
    & \multicolumn{4}{c}{CIFAR-100} \\
    \cmidrule(l{4pt}r{4pt}){3-6}
    \cmidrule(l{4pt}r{4pt}){7-10}

    &
    & $d=2$ & $d=4$ & $d=8$ & $d=16$
    & $d=2$ & $d=4$ & $d=8$ & $d=16$ \\
\midrule
Product Quantization
    & {$k=2$}
    & 21.88 & 50.38 & 78.68 & 87.24
    & 2.84  & 7.77 & 15.24 & 32.47
    \\ & {$k=4$}
    & 60.70 & 86.86 & 89.53 & 90.92
    & 7.69  & 18.78 & 35.62 & 52.02
    \\ & {$k=8$}
    & 90.46 & 90.44 & 90.98 & 91.49
    & 15.28 & 31.04 & 50.74 & 58.82
    \\ & {$k=16$}
    & 91.29 & 91.02 & 91.27 & 91.52
    & 25.71 & 43.16 & 55.88 & 61.00
    \\
\midrule
DIMCO (ours)
    & {$k=2$}
    & 36.37 & 64.43 & 83.10 & 88.85
    & 3.66  & 10.03 & 19.31 & 33.46
    \\ & {$k=4$}
    & 62.92 & 88.47 & 90.78 & 91.04
    & 11.83 & 25.7  & 37.4  & 53.36
    \\ & {$k=8$}
    & 90.77 & 91.29 & 91.41 & 91.45
    & 31.9  & 46.22 & 52.77 & 58.84
    \\ & {$k=16$}
    & \bf 91.49 & \bf 91.68 & \bf 91.46 & \bf 91.57
    & \bf 46.17 & \bf 57.83 & \bf 61.11 & \bf 62.49
    \\
\bottomrule
\end{tabular}
\caption{
    Top-1 accuracies under various compressed embedding sizes ($k, d$).
    The original embeddings were continuous $64$-dimensional embeddings, which corresponds to $k=2^{32}, d=64$.
}
\label{tab:comp_cifar}
\end{table*}

\begin{table}[t]
    \centering \begin{tabular}{
        l
        cccc
        cccc
        }
    \toprule
        & \multicolumn{4}{c}{Product Quantization}
        & \multicolumn{4}{c}{DIMCO} \\
        \cmidrule(l{4pt}r{4pt}){2-5}
        \cmidrule(l{4pt}r{4pt}){6-9}
    
        & $d=2$ & $d=4$ & $d=8$ & $d=16$
        & $d=2$ & $d=4$ & $d=8$ & $d=16$ \\
    \midrule
        {$k=2$}
        & 0.21 & 0.31 & 1.00 & 5.21
        & 0.14  & 0.42 & 1.47 & 5.50
        \\ {$k=4$}
        & 0.20 & 0.92 & 4.86 & 21.45
        & 0.67  & 2.68 & 11.80 & 26.96
        \\ {$k=8$}
        & 0.82 & 2.57 & 11.91 & 35.43
        & 1.71 & 12.82 & 33.28 & 48.11
        \\ {$k=16$}
        & 1.99 & 5.66 & 20.34 & 44.93
        & 4.90 & 26.22 & 44.01 & 55.34
        \\
    \bottomrule
    \end{tabular}
    \caption{
        Top-1 accuracies of compressed embeddings under various sizes ($k, d$) on the ImageNet dataset.
        The original embeddings were continuous $2048$-dimensional embeddings, which corresponds to $k=2^{32}, d=2048$.
    }
    \label{tab:comp_cifar_app}
    \end{table}

\begin{figure} \centering
    \includegraphics[width=.47\linewidth]{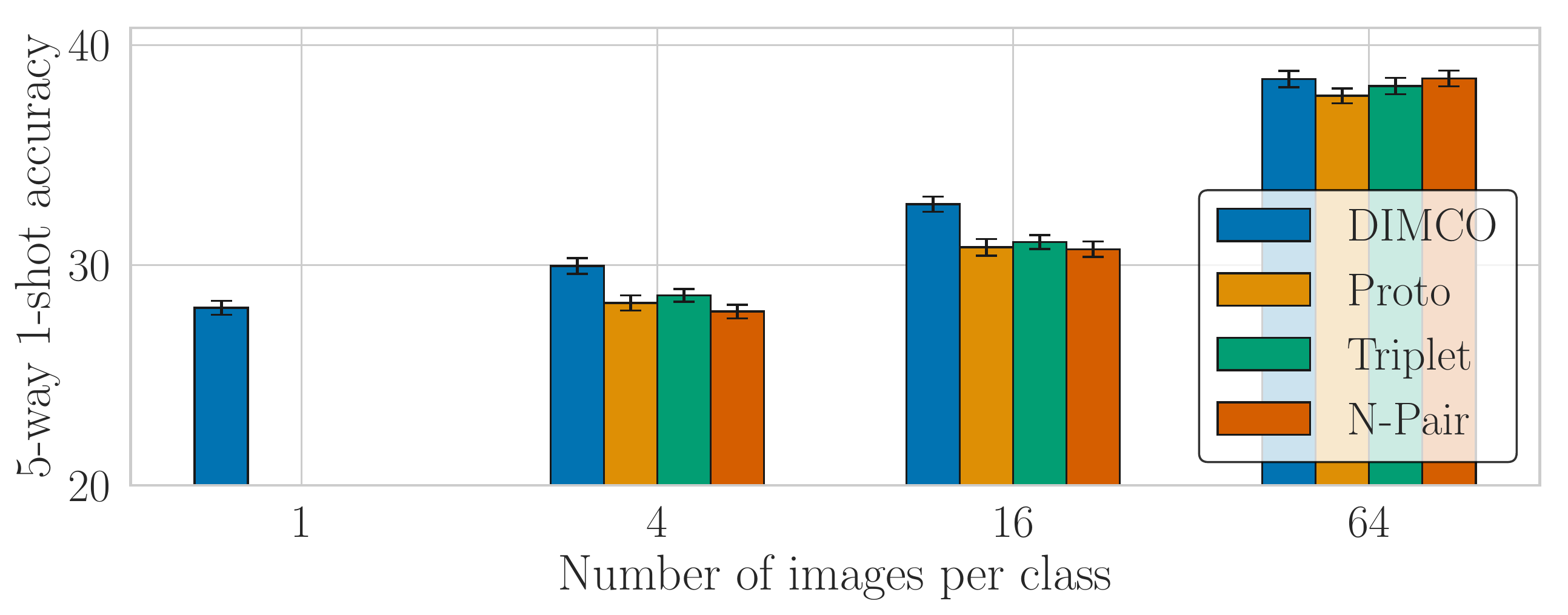} \hfill
    \includegraphics[width=.47\linewidth]{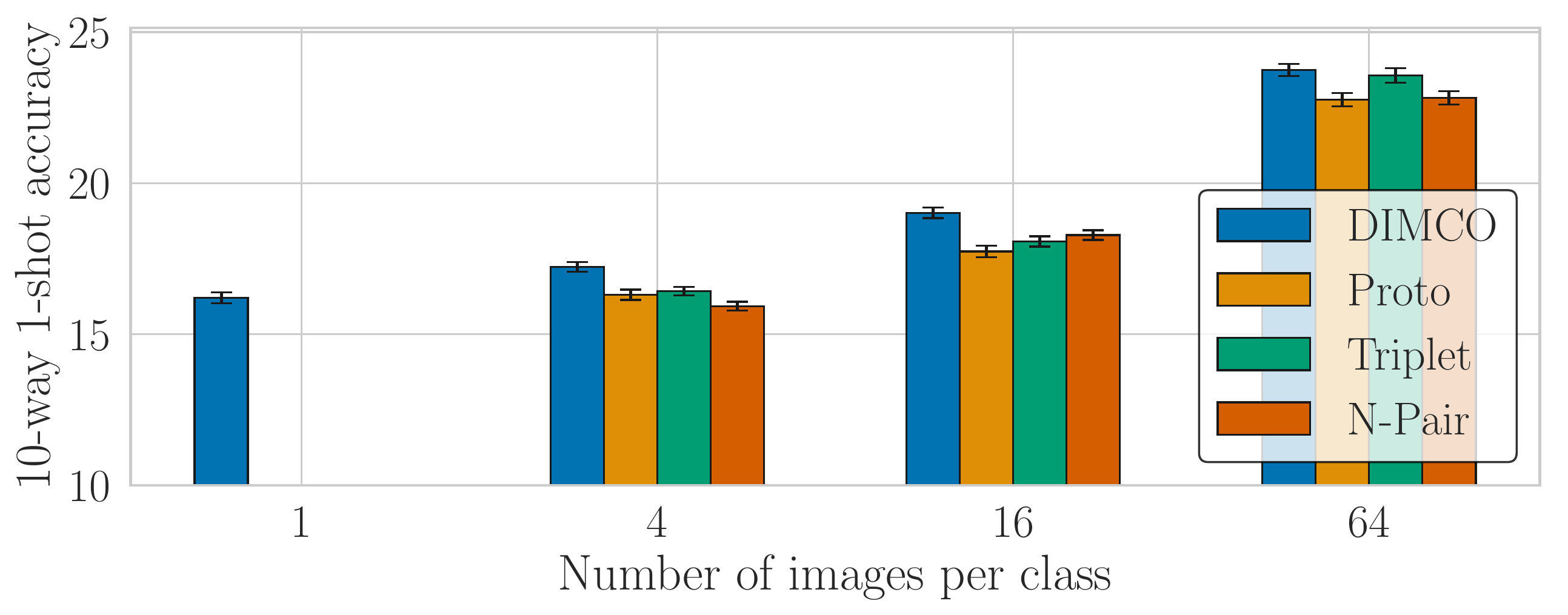} \\
    \includegraphics[width=.47\linewidth]{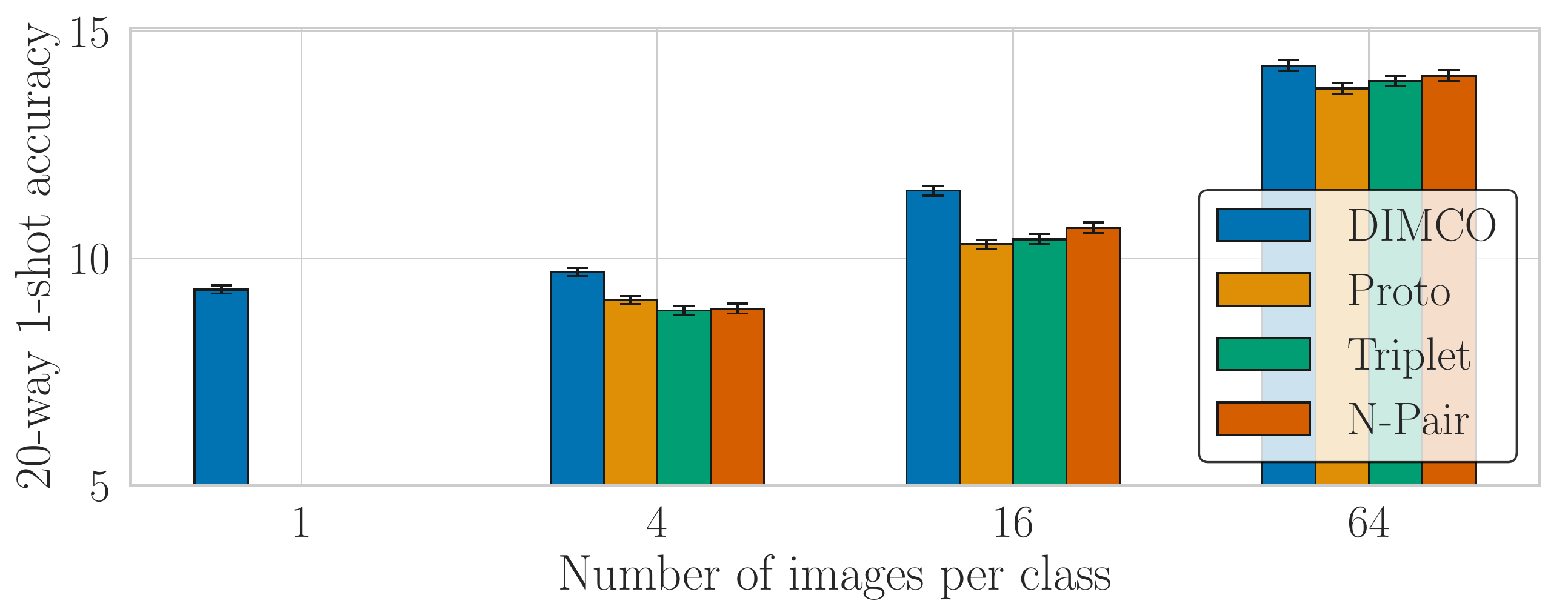} \hfill
    \includegraphics[width=.47\linewidth]{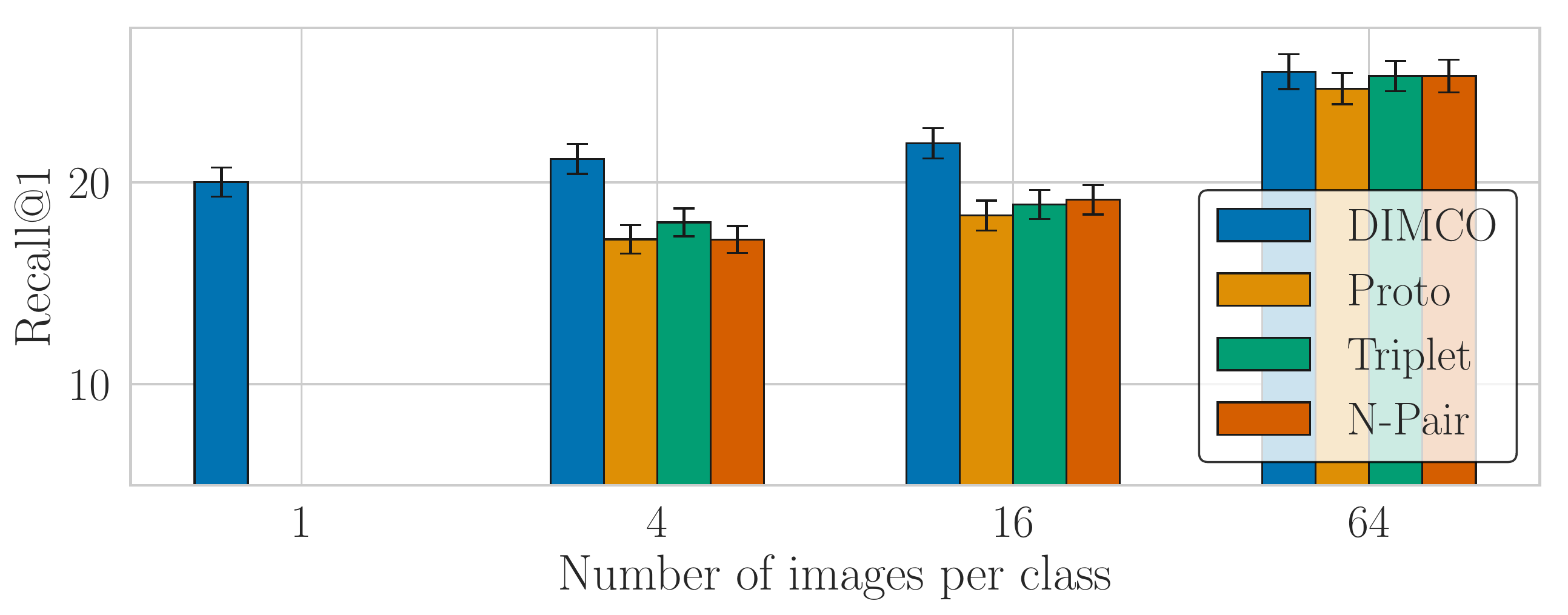}
\caption{
    Performance of methods trained using subsets of miniImageNet of varying size.
    The lowermost y axis value for each metric corresponds to the expected performance of random guessing.
    We show the mean and standard deviation of top $5$ runs from a hyperparameter sweep of $64$ runs per configuration.
}
\label{fig:small_app}
\end{figure}
\begin{table}[t]
\vspace{-7pt}
\caption{
    Few-shot classification accuracies on the miniImageNet benchmark.
    Grouped according to backbone architecture.
    $\dagger$ denotes transductive methods, which are more expressive by taking unlabeled examples into account.
}
\label{tab:fewshot_app}
\vspace{3pt}
\centering \begin{tabular}{lcc}
\toprule                    %
    Method                  & $5$-way $1$-shot      & $5$-way $5$-shot      \\
\midrule 
\multicolumn{2}{l}{\quad ConvNet (64-64-64-64)}                    \\
    $^\dagger$ TPN \citep{liu2018transductive}
                            & 55.51 $\pm$ 0.86      & 69.86 $\pm$ 0.65      \\
    $^\dagger$ FEAT \citep{ye2018learning}
                            & 55.75 $\pm$ 0.20      & 72.17 $\pm$ 0.16      \\
    MetaLSTM \citep{ravi2016optimization}             
                            & 43.44 $\pm$ 0.77      & 60.60 $\pm$ 0.71      \\
    MatchingNet \citep{vinyals2016matching}             
                            & 43.56 $\pm$ 0.84      & 55.31 $\pm$ 0.73      \\
    ProtoNet \citep{snell2017prototypical}               
                            & 49.42 $\pm$ 0.78      & 68.20 $\pm$ 0.66      \\
    RelationNet \citep{sung2018learning}               
                            & 50.44 $\pm$ 0.82      & 65.32 $\pm$ 0.70      \\
    R2D2 \citep{bertinetto2018meta}               
                            & 51.2 $\pm$ 0.6      & \BF 68.8 $\pm$ 0.1      \\
    MetaOptNet-SVM \citep{lee2019meta}          
                            & 52.87 $\pm$ 0.57      & 68.76 $\pm$ 0.48  \\
    DIMCO ($64, 64$)        & 47.33 $\pm$ 0.46      & 61.59 $\pm$ 0.52      \\
    DIMCO ($64, 128$)       & \BF 53.29 $\pm$ 0.47  & 64.79 $\pm$ 0.57      \\
\midrule \multicolumn{3}{l}{\quad ResNet-12} \\
    $^\dagger$ TPN \citep{liu2018transductive}
                            & 59.46                 & 75.65                 \\
    $^\dagger$ FEAT \citep{ye2018learning}
                            & 62.60 $\pm$ 0.20      & 78.06 $\pm$ 0.15  \\
    SNAIL \citep{mishra2017simple}                  
                            & 55.71 $\pm$ 0.99      & 68.88 $\pm$ 0.92      \\
    AdaResNet \citep{munkhdalai2017rapid}              
                            & 56.88 $\pm$ 0.62      & 71.94 $\pm$ 0.57      \\
    TADAM \citep{oreshkin2018tadam}                  
                            & 58.50 $\pm$ 0.30      & 76.70 $\pm$ 0.30      \\
    MetaOptNet-SVM \citep{lee2019meta}         
                            & \BF 62.64 $\pm$ 0.61  & \BF 78.63 $\pm$ 0.46  \\
    DIMCO ($128, 64$)       & 54.57 $\pm$ 0.47      & 65.45 $\pm$ 0.31      \\
    DIMCO ($128, 128$)      & 57.24 $\pm$ 0.44      & 69.31 $\pm$ 0.38      \\
\bottomrule
\end{tabular}
\end{table}

\section{Experiments}
\paragraph{Parameterizing the Code Layer}
Recall that each discrete code is parameterized by a $\Real^{k \times d}$ matrix.
A problem with a naive implementation of DIMCO is that simply using a linear layer that maps $\Real^{D}$ to $\Real^{k \times d}$ takes $Dkd$ parameters in that single layer.
This can be prohibitively expensive for large embeddings, e.g. $d=4096$.
We therefore parameterize this code layer as the product of two matrices, which sequentially map $\Real^D \rightarrow \Real^n \rightarrow \Real^{k \times d}$.
The total number of parameters required for this is $nD + nkd$.
We fix all $n=128$.
While more complicated tricks could reduce the parameter count even further, we found that this simple structure was sufficient to produce the results in this paper.

\paragraph{Correlation of Metrics}
We collect statistics from $8$ different independent runs,
and report the average of $500$ batches of $1$-shot accuracies, Recall@1, and mutual information.
$I(\tX; Y)$ was computed using balanced batches of $16$ images each from $5$ different classes.
In addition to the experiment in the paper, we measured the correlation between $1$-shot accuracies, $\textrm{Recall}@1$, and NMI using three previously proposed losses (triplet, npair, protonet).
\cref{fig:corr_all} shows that even for other methods for which mutual information is not the objective, mutual information strongly correlates with all other previous metrics.

\paragraph{Code Visualizations}
We provide additional visualizations of codes in \cref{fig:code_vis_app}.
These examples consistently show that each code encodes a semantic concept, and that such concepts can be but are not necessarily tied to a particular class.

\paragraph{Label-aware Bit Compression}
We computed top-1 accuracies using a kNN classifier on each type of embedding with $k=200$.
We present extended results comparing PQ and DIMCO on ImageNet embeddings in \cref{tab:comp_cifar_app}.
For CIFAR-10 and CIFAR-100 pretrained ResNet20, we used pretrained weights of open-sourced repository\footnote{\url{https://github.com/chenyaofo/CIFAR-pretrained-models}}, and for ImageNet pretrained ResNet50, we used \texttt{torchvision}.
We optimized the probablistic encoder with \texttt{Adam} optimizer \citep{kingma2014adam} with learning rate of $\textrm{1e-2}$ for CIFAR-100 and ImageNet, and $\textrm{3e-3}$ for CIFAR-10.

\paragraph{Small Train Set}

We performed an experiment to see how well DIMCO can generalize to new datasets \textit{after training with a small number of datasets}.
We trained each model using $\{1, 4, 16, 64\}$ samples from each class in the miniImageNet dataset.
For example, $4$ samples means that we train on ($64$ classes$\times$$4$ images) instead of the full ($64$ classes$\times$$600$ images).
We compare against three methods which use continuous embeddings for each datapoint: 
Triplet Nets \citep{hoffer2015deep},
multiclass N-pair loss \citep{sohn2016improved},
and ProtoNets \citep{snell2017prototypical}.
We show the $\textrm{Recall}@1$ metric, and additionally show $\{5, 10, 20\}$-way $1$-shot accuracies in the supplementary material.

\Cref{fig:small_app} shows that DIMCO learns much more effectively compared to previous methods when the number of examples per class is low.
We attribute this to DIMCO's tight generalization gap.
Since DIMCO uses fewer bits to describe each datapoint, the codes act as an implicit regularizer that helps generalization to unseen datasets.
We additionally note that DIMCO is the only method in \cref{fig:small_app} that can train using a dataset consisting of $1$ example per class.
DIMCO has this capability because, unlike other methods, DIMCO requires no support/query (also called train/test) split and maximizes the mutual information within a given batch.
In contrast, other methods require at least one support and one query example per class within each batch.

For this experiment, we used the Adam optimizer and performed a log-uniform hyperparameter sweep for learning rate $\in [\textrm{1e-7}, \textrm{1e-3}]$
For DIMCO, we swept $p \in [32, 128]$ and $d \in [16, 32]$.
For other methods, we made the embedding dimension $\in [16, 32]$.
For each combination of loss and number of training examples per class, we ran the experiment $64$ times and reported the mean and standard deviation of the top $5$.

\paragraph{Few-shot Classification}
For this experiment, we built on the code released by \citet{lee2019meta} (\url{https://github.com/kjunelee/MetaOptNet}) with minimal adjustments.
We used the repository's default datasets, augmentation, optimizer, and backbones.
The only difference was our added module for outputting discrete codes.
We show an extended table with citations in \cref{tab:fewshot_app}.

\end{document}